\documentclass{article}

\usepackage[preprint]{neurips_2026}

\usepackage[utf8]{inputenc} 
\usepackage[T1]{fontenc}    
\usepackage[colorlinks=true,linkcolor=blue,citecolor=blue,urlcolor=blue]{hyperref}      
\usepackage{xurl}            
\usepackage{booktabs}       
\usepackage{amsfonts}       
\usepackage{nicefrac}       
\usepackage{microtype}      
\usepackage{xcolor}         

\usepackage{amsmath,amssymb,mathtools}
\usepackage{amsthm}
\usepackage{physics}
\mathtoolsset{showonlyrefs}  

\usepackage{graphicx}
\usepackage{subcaption}   
\usepackage{float}
\usepackage{wrapfig}
\usepackage{tikz}
\usepackage{multirow}

\usepackage{csquotes}
\usepackage{etoolbox}
\usepackage{makecell}
\usepackage{enumitem}     
\usepackage{pifont}
\usepackage{soul}         
\usepackage{cancel}
\usepackage{bbm}
\usepackage{mathrsfs}
\newcommand{\compactpara}[1]{\noindent\textbf{#1}\quad}

\usepackage[ruled]{algorithm2e}
\DontPrintSemicolon

\theoremstyle{plain}
\newtheorem{theorem}{Theorem}[section]
\newtheorem{lemma}[theorem]{Lemma}

\theoremstyle{definition}
\newtheorem{definition}[theorem]{Definition}
\newtheorem{assumption}[theorem]{Assumption}
\theoremstyle{remark}

\newcommand{\fY}{\mathcal{Y}}\newcommand{\fR}{\mathcal{R}}\newcommand{\fB}{\mathcal{B}}
\newcommand{\fZ}{\mathcal{Z}}\newcommand{\fT}{\mathcal{T}}
\newcommand{\fG}{\mathcal{G}}\newcommand{\fD}{\mathcal{D}}
\newcommand{\fX}{\mathcal{X}}\newcommand{\fL}{\mathcal{L}}

\newcommand{\R}{\mathbb{R}}\newcommand{\E}{\mathbb{E}}
\newcommand{\indot}[2]{\left\langle #1,\, #2\right\rangle}
\newcommand{\enrank}{\mathrm{EnRank}}
\newcommand{\effrank}{\mathrm{EffRank}}

\newcommand{\act}{\mathrm{act}}
\newcommand{\eNTK}[1]{\widehat{K}(#1)}
\newcommand{\relu}{\mathrm{ReLU}}
\newcommand{\mgap}{M_\mathrm{gap}}
\newcommand{\reg}{\mathrm{reg}}
\newcommand{\cls}{\mathrm{cls}}
\newcommand{\pre}{\mathrm{pre}}
\newcommand{\rand}{\mathrm{rand}}
\newcommand{\val}{\mathrm{val}}
\newcommand{\full}{\mathrm{full}}
\newcommand{\onevec}{\mathbf{1}}
\newcommand{\optread}{\textsc{OR}}
\newcommand{\indicator}[1]{\mathbb{I}\qty(#1)}
\newcommand{\emloss}{\widehat{\fL}}
\newcommand{\emgrad}{\widehat{g}}

\title{Predicting Plasticity in Deep Continual Learning: \\A Theoretical Perspective}

\author{%
  Jiuqi Wang \\
  University of Virginia\\
  \texttt{jiuqi@email.virginia.edu} \\
  \And
  Jayanth Srinivasa \\
  Cisco Research \\
  \texttt{jasriniv@cisco.com} \\
  \AND
  Claire Chen \\
  California Institute of Technology\\
  \texttt{clairechen@caltech.edu} \\
  \And
  Shuze Daniel Liu \\
  Purdue University\\
  \texttt{liu4869@purdue.edu} \\
  \And
  Ali Payani\\
  Cisco Research \\
  \texttt{apayani@cisco.com} \\
  \And
  Shangtong Zhang \\
  University of Virginia \\
  \texttt{shangtong@virginia.edu} \\
}

\begin{document}

\maketitle

\begin{abstract}
Deep continual learning requires models to adapt to new tasks without retraining from scratch. 
However, neural networks can lose their ability to adapt to new tasks after training on previous ones, a phenomenon known as loss of plasticity. 
There have been several explanations and diagnostics proposed for plasticity loss. 
Motivated by the philosophy that ``all models are wrong, but some are useful", 
we ask: can existing diagnostics predict a neural network's plasticity? 
In this work, we take a practical view to interpret plasticity as trainability, i.e., a neural network's future optimization gain on a target task. 
We first take a theoretical approach, showing, by constructing a few counterexamples, that some widely adopted diagnostics of plasticity, 
including representation rank and neural tangent kernel rank, 
can fail to predict the loss of trainability in both regression and classification settings. 
We instead propose a novel metric, called optimization readiness, which combines gradient strength and gradient reliability. 
We prove that optimization readiness lower bounds one-step optimization gain under standard smoothness assumptions, providing a theoretical guarantee for its predictive power. 
Empirically, we show that across commonly used deep continual learning settings, such as Slowly-Changing Regression and Permuted MNIST, optimization readiness more reliably ranks checkpoints by trainability than prior diagnostics, even with substantially fewer samples.
\end{abstract}
\section{Introduction}
Deep continual learning describes a machine learning setting in which a neural network learns from a sequence of tasks, where the learner needs to adapt sequentially while retaining useful knowledge from earlier experience~\citep{aljundi2019continual, parisi2019continual}.
Such continual adaptation is possible only if the neural network model remains trainable,
i.e., it can achieve meaningful improvement on the new task via gradient-based optimization.
Unfortunately, continual adaptation cannot be achieved simply by continuing gradient-based training on newly arriving data, because neural networks can suffer from loss of plasticity~\citep{dohare2024lop}.
Loss of plasticity is the phenomenon in which a previously trained model loses its ability to adapt to new tasks, often performing worse than a randomly initialized network after the same amount of training on the new task.
In severe cases, optimization may stagnate, leaving the learning curve flat and posing a serious challenge to continual adaptation~\citep{abbas2023loss}.

Several diagnostics and mechanisms are proposed for explaining plasticity loss. 
For instance, the effective rank of the representations~\citep{dohare2024lop}, the rank of the empirical neural tangent kernel~\citep{lyle2025disentangling,tang2025mitigating}, the directions of curvature~\citep{lewandowski2024direction}, and the proportion of dead/dormant neurons~\citep{sokar2023dormant,ma2024revisiting}.
Motivated by Box's philosophy that ``all models are wrong, but some are useful,''~\citep{box1976science,box1986empirical} 
we ask: can existing diagnostics predict a neural network's plasticity?
To study this question, we first need to operationalize plasticity.
Unfortunately, plasticity itself is operationalized differently across the literature.
\citet{dohare2024lop,kumar2025maintaining} evaluate plasticity through learning speed.
\citet{wang2026ftle} evaluate plasticity through final performance after a fixed training budget.
\citet{abbas2023loss,lyle2023understanding,nikishin2023deep} evaluate plasticity through performance relative to a randomly initialized model.
A consensus we perceive is that a network is plastic if additional gradient-based optimization produces a meaningful reduction in loss.
Here, by meaningful reduction, we mean a reduction that is significant relative to the initial loss.
We refer to this perceived consensus as trainability, and we view it as a concrete aspect of plasticity in this work.
We will later mathematically formalize trainability as the $k$-step gain, the relative loss reduction achieved after $k$ gradient steps on the target task,
and use it as the target we wish to predict from diagnostic metrics.

Recent empirical evidence suggests that some commonly used diagnostics are unreliable indicators of plasticity~\citep{lyle2023understanding,lewandowski2024direction, baveja2025unified}. 
However, existing falsification is largely empirical, and there remains a limited theoretical understanding of why such metrics can fail.

In this work, we show that several structural diagnostics can be misleading:
they may assign a favorable value to a neural network checkpoint even when
gradient descent cannot make progress on the relevant task. 
In particular, our theoretical counterexamples focus on label-agnostic structural metrics, such as representation rank and eNTK rank, which depend on the
input distribution and checkpoint but not directly on task labels. 
These negative results suggest that favorable structural diagnostic values alone
are insufficient for predicting future optimization progress. 
We therefore propose an optimization-derived metric that is directly motivated by the
one-step descent dynamics of stochastic gradient descent.

Specifically, our paper makes the following contributions:
\begin{enumerate}
    \item We prove by construction of counterexamples that some widely-adopted diagnostics of plasticity fail to predict the loss of trainability in both regression and binary classification settings.
    In particular, we show that label-agnostic metrics, such as the effective rank, the 99\% energy rank of the representation matrix, and the eNTK, can fail even in simple constructions.
    These results show that favorable structural diagnostic values alone do not guarantee future optimization progress.
    \item Motivated by these negative results, we propose Optimization Readiness
    ($\optread$), a lightweight optimization-derived metric designed to predict
    trainability. 
    We prove that $\optread$ lower-bounds the one-step optimization gain under standard smoothness assumptions, providing a theoretical guarantee of its predictive power. 
    \item Empirically, we evaluate $\optread$ on Slowly-Changing Regression and Permuted MNIST, two standard testbeds of loss of plasticity~\citep{dohare2024lop}, and demonstrate that it accurately predicts the relative ordering of checkpoints by their trainability.
    In our ablation study, we additionally show that $\optread$ is data-efficient, achieving high pairwise ranking accuracy even with 10\%, 1\%, and 0.1\% of the validation data in Slowly-Changing Regression and 10\% and 1\% of the Permuted MNIST validation set.
\end{enumerate}

\section{Preliminaries and Problem Setup}

We study whether a diagnostic computed at a checkpoint can predict its
\emph{trainability} on a newly observed supervised task. A continual
supervised-learning problem is a sequence of tasks
$\fT_1,\fT_2,\ldots$, where each task
$\fT=(\fX,\fY,P_{\fX\fY})$ consists of an input space, a label
space, and a joint distribution over $\fX\times\fY$.
Tasks arrive sequentially, and samples within each task are drawn i.i.d.
from the corresponding distribution. 
A learner is a neural network
$f_\theta:\fX \to \fZ$, and all tasks share a pointwise
loss $\ell:\fZ\times\fY\to \R_{\ge 0}$. 
The population loss on task $\fT$ is
$\fL(\theta;\fT)\doteq \E_{(x,y)\sim P_{\fX\fY}}\qty[\ell(f_\theta(x),y)]$.

\compactpara{Trainability target.}
For a checkpoint $\theta$, our target quantity is the relative loss
reduction obtained by continuing gradient-based optimization on $\fT$.
Let $g(\theta;\fT) \doteq \nabla_\theta \fL(\theta;\fT)$
denote the population gradient. 
For a mini-batch $\fB\doteq\{(x_i,y_i)\}_{i=1}^m$, define
$\emloss_\fB(\theta;\fT) \doteq \frac{1}{m}\sum_{i=1}^m \ell(f_\theta(x_i),y_i)$ as the empirical loss and
$\emgrad_\fB(\theta;\fT) \doteq \nabla_\theta \emloss_\fB(\theta;\fT)$.
Starting from $\theta_0=\theta$, stochastic gradient descent with
learning rate $\eta>0$ follows
\begin{align}
    \theta_{t+1} =
    \theta_t - \eta \emgrad_{\fB_t}(\theta_t;\fT),
    \label{eq: mini-batch gradient descent}
\end{align}
for $t = 0,\ldots,k-1$,
where $\fB_0,\ldots,\fB_{k-1}$ are sampled i.i.d. from $P_{\fX\fY}$.
We define the \(k\)-step gain as
\begin{definition}[mini-batch $k$-step gain]
    \begin{align}
        \fG^{(k)}(\theta;\fT) \doteq
        \begin{cases}
        \displaystyle
        \frac{\fL(\theta;\fT) - \E_{\fB_0,\ldots,\fB_{k-1}}\qty[\fL(\theta_k;\fT)]}
        {\fL(\theta;\fT)}, & \fL(\theta;\fT)>0, \\
        0, & \fL(\theta;\fT)=0 .
        \end{cases}
    \end{align}
    \label{def: mini-batch k-step gain}
\end{definition}

Thus, $\fG^{(k)}$ measures the expected relative improvement after $k$
gradient steps, and we use it as an operational proxy for plasticity.
When the neural network loses its plasticity, we expect the $k$-step gain to be small across different $k$'s.
Normalization in $\fG^{(k)}$ is important because the absolute reduction
in loss can be misleading across checkpoints with different initial losses.
For example, a checkpoint close to convergence may have little room for
absolute improvement, and therefore may exhibit a smaller absolute loss
reduction than a randomly initialized checkpoint. 
This does not necessarily mean that it is less trainable --- after the same number of optimization steps, it may reduce its remaining loss by a comparable, or even larger, proportion.
Normalizing by the initial loss, therefore, makes the gain comparable across
checkpoints at different stages of optimization.

\compactpara{Matrix-rank diagnostics.}
Several existing plasticity diagnostics are based on the spectrum of
representations, neural tangent kernels (NTK,~\citet{jacot2018neural}), or curvature matrices. 
For a matrix $Z\in\R^{m \times n}$, 
let $\sigma_1\ge \cdots \ge \sigma_q \ge 0$, $q=\min\qty{m,n}$, denote its
singular values. 
If $Z \neq 0$, define $p_i=\sigma_i/\sum_j\sigma_j$. 
The effective rank is $\effrank(Z) \doteq \exp\qty(-\sum_{i: p_i>0} p_i\log p_i)$,
with the convention $\effrank(0)=0$. 
For $\tau\in(0,1)$, the $\tau$-energy rank is $\enrank{\tau}(Z) \doteq
    \min\qty{k: \frac{\sum_{i=1}^k \sigma_i^2}{\sum_{j=1}^r \sigma_j^2} \ge \tau}$,
where $r \doteq \rank(Z)$.
Again, we assume $\enrank_{\tau}(0)=0$. 
In this paper, we use $\tau=0.99$.
Let $\phi_\theta(x)\in\R^d$ be the penultimate-layer representation of $f_\theta$. 
For inputs $X \doteq (x_1, \ldots, x_m)$, 
let $\Phi_\theta(X) \in \R^{m\times d}$ be the representation matrix
whose $i$-th row is $\phi_\theta(x_i)$. 
We consider $\effrank\qty(\Phi_\theta(X))$ and $\enrank_{0.99}\qty(\Phi_\theta(X))$
as representation-rank diagnostics.

For scalar-output networks, the empirical neural tangent kernel
(eNTK) on \(X\) is
$\eNTK{X;\theta}_{ij} \doteq \indot{\nabla_\theta f_\theta(x_i)}{\nabla_\theta f_\theta(x_j)}$.
We use $\enrank_{0.99}\qty(\eNTK{X;\theta})$ as the eNTK-rank diagnostic. 
We also compare against a curvature diagnostic, where
for $H(\theta;\fT)\doteq \nabla_\theta^2 \fL(\theta;\fT)$,
we measure $\enrank_{0.99}\qty(H(\theta;\fT))$, or an empirical approximation thereof.

\compactpara{Active-neuron diagnostic.}
For a network with $L$ hidden layers, 
let $a_i(x;\theta)\in\R^{w_i}$ denote the activation vector of hidden layer $i$. 
Given inputs $X\doteq\qty{x_1,\ldots,x_m}$, define the
relative activation score of neuron $j$ in layer $i$ as
$s_{i,j}(X;\theta)\doteq 
    \frac{\frac{1}{m}\sum_{x\in X}\abs{a_i(x;\theta)_j}}
    {\frac{1}{m}\sum_{x\in X}
      \frac{1}{w_i}\sum_{r=1}^{w_i}\abs{a_i(x;\theta)_r} + \varepsilon}$,
where $\varepsilon > 0$ is a small constant to prevent division by zero.
For a threshold \(\tau_\act \ge 0\), the active-neuron fraction is
$A_{\tau_\act}(X;\theta)\doteq
\frac{\sum_{i=1}^L \sum_{j=1}^{w_i}\indicator{s_{i,j}(X;\theta)\ge \tau_\act}}
{\sum_{i=1}^L w_i}$, where $\indicator{\cdot}$ is the indicator function.

\section{Theoretical Falsifications of Structural Plasticity Metrics}
\label{sec:theoretical-falsifications}

In this section, we present theoretical counterexamples to some previously adopted diagnostic metrics of plasticity.
We will show, by construction, that the neural network cannot make any progress at all at a particular checkpoint in some tasks, even though those metrics would assign a favorable value to that checkpoint.
In particular, the structural metrics we study in this section do not use task labels directly, and they exhibit a similar failure mode across the same checkpoint-task construction.
We provide theoretical falsifications for both regression and binary classification tasks.

\compactpara{Shared Construction.}
Fix $n\ge 4$ and $M\ge n$. 
For binary classification, we additionally assume that
$n$ is even. 
Let $\bar X \doteq I_n$ and $\bar x_i \doteq e_i$ for $i=1,\dots,n$,
and consider the two-layer ReLU network
$f_\theta(x) \doteq v^\top \relu(Wx)$, where
$\theta \doteq (v,W)$,
$ v\in\R^M$,
and $W\in\R^{M\times n}$.   
Then, for any checkpoint $\theta$, define the representation matrix on the full
support by
$\Phi_\theta(\bar X)_{i,j} \doteq \relu(w_j^\top \bar x_i)=\relu(W_{j,i}),
\quad\Phi_\theta(\bar X)\in\R^{n\times M}$.
Given a label vector $\bar Y \doteq \mqty[\bar y_1 & \cdots & \bar y_n]^\top$ and a
differentiable pointwise loss $\ell(z,y)$, 
let $\bar\fT(\bar Y,\ell)$ denote the full-support task that samples uniformly from
$\qty{(\bar x_i,\bar y_i)}_{i=1}^n$. 
Then, its population loss is
$\fL(\theta;\bar\fT(\bar{Y}, \ell)) 
= \frac1n\sum_{i=1}^n \ell(f_\theta(\bar x_i),\bar y_i)$.
We analyze the full-support gradient descent
$\theta_{s+1}=\theta_s-\eta\nabla_\theta\fL(\theta_s;\bar\fT),
\quad
\theta_0=\theta$,
and the corresponding full-support $k$-step gain
\begin{align}
    \fG_{\full}^{(k)}(\theta;\bar\fT(\bar{Y},\ell))
    \doteq
    \begin{cases}
    \dfrac{\fL(\theta;\bar\fT(\bar{Y},\ell))-\fL(\theta_k;\bar\fT(\bar{Y},\ell))}
          {\fL(\theta;\bar\fT(\bar{Y},\ell))},
    & \fL(\theta;\bar\fT(\bar{Y},\ell))>0,\\
    0, & \fL(\theta;\bar\fT(\bar{Y},\ell))=0.
    \end{cases}
\end{align}
    
This is a favorable setting for the learner because the update uses the exact population gradient rather than a noisy mini-batch gradient.

The following three lemmas are the only common ingredients needed for both
counterexamples.

\begin{lemma}[one step update]
\label{lem:zero-output-update}
For a task \(\bar\fT(\bar Y,\ell)\), define $q(\theta)\doteq \mqty[q_1(\theta),\dots,q_n(\theta)]^\top$, where
$q_i(\theta)\doteq\left. \frac{\partial}{\partial z}\ell(z,\bar y_i) \right|_{z=f_\theta(\bar x_i)}$.
If $\theta=(0,W)$, then
$\nabla_v\fL(\theta;\bar\fT(\bar{Y},\ell))
=\frac1n \Phi_\theta(\bar X)^\top q(\theta)$
and 
$\nabla_W\fL(\theta;\bar\fT(\bar{Y},\ell))=0$.
Consequently, if
$\Phi_\theta(\bar X)^\top q(\theta)=0$,
then full-support gradient descent leaves $\theta$ unchanged for every
learning rate $\eta>0$, and hence
$\fG_{\full}^{(k)}(\theta;\bar\fT(\bar{Y},\ell))=0$
for all $k\ge 1$.
Moreover, one full-support gradient step from $\theta_0\doteq(0,W)$ satisfies
$v_1=-\frac{\eta}{n}\Phi_{\theta_0}(\bar X)^\top q(\theta_0)$
and $W_1=W$, and therefore
$f_{\theta_1}(\bar X)= -\eta G(W)q(\theta_0)$, where
$G(W)\doteq\frac1n\Phi_{\theta_0}(\bar X)\Phi_{\theta_0}(\bar X)^\top$ is the Gram matrix.
\end{lemma}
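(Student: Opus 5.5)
The plan is a direct chain-rule computation. It splits into three pieces: the gradient formulas at a zero-output-weight checkpoint, the fixed-point consequence, and the explicit one-step update.

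\textbf{Gradients at $\theta=(0,W)$.} Write $f_\theta(\bar x_i)=\sum_{j=1}^M v_j\,\relu(W_{j,i})=\Phi_\theta(\bar X)_{i,:}\,v$, using $\bar x_i=e_i$. By the chain rule,
\begin{align}
\nabla_\theta\fL=\frac1n\sum_{i=1}^n q_i(\theta)\,\nabla_\theta f_\theta(\bar x_i).
\end{align}
\begin{itemize}
\item Since $f$ is linear in $v$ with $\nabla_v f_\theta(\bar x_i)=\Phi_\theta(\bar X)_{i,:}^\top$, this gives $\nabla_v\fL=\frac1n\Phi_\theta(\bar X)^\top q(\theta)$ at any checkpoint.
\item For $W$, $f_\theta(\bar x_i)$ depends on $W$ only through the entries $W_{j,i}$, with derivative $v_j\,\relu'(W_{j,i})$.
\item At $v=0$ this vanishes for every $(j,i)$, whatever value is assigned to $\relu'(0)$, so $\nabla_W\fL=0$.
\end{itemize}
The only subtlety is the nondifferentiability of ReLU at zero. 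I will handle it by noting that any subgradient convention is multiplied by $v_j=0$, so the partial derivatives with respect to $W$ are genuinely zero.

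\textbf{Stationarity consequence.} If $\Phi_\theta(\bar X)^\top q(\theta)=0$, both gradient blocks vanish, so $\theta_1=\theta_0$. Induction gives $\theta_s=\theta$ for all $s$ and every $\eta>0$. Then $\fL(\theta_k)=\fL(\theta)$, and $\fG_{\full}^{(k)}=0$ by definition in both branches, whether the loss is positive or zero.

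\textbf{One-step update.} From $\theta_0=(0,W)$, the update gives $v_1=0-\eta\nabla_v\fL=-\frac{\eta}{n}\Phi_{\theta_0}(\bar X)^\top q(\theta_0)$ and $W_1=W$. Because $W_1=W$, the representation is unchanged: $\Phi_{\theta_1}(\bar X)=\Phi_{\theta_0}(\bar X)$. Therefore
\begin{align}
f_{\theta_1}(\bar X)=\Phi_{\theta_0}(\bar X)v_1=-\eta\,\tfrac1n\Phi_{\theta_0}(\bar X)\Phi_{\theta_0}(\bar X)^\top q(\theta_0)=-\eta G(W)q(\theta_0).
\end{align}

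I expect no real obstacle here. The only point requiring care is the ReLU derivative at zero entries of $W$, which is settled by the factor $v_j=0$ as above.
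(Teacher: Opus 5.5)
Your proposal is correct and follows essentially the same route as the paper. Both arguments run through the same steps:
\begin{itemize}
\item a direct chain-rule computation giving $\nabla_v\fL=\frac1n\Phi_\theta(\bar X)^\top q(\theta)$;
\item vanishing of the first-layer gradient because every term carries the factor $v_j=0$;
\item stationarity of full-support gradient descent when $\Phi_\theta(\bar X)^\top q(\theta)=0$;
\item the explicit update $v_1=-\frac{\eta}{n}\Phi_{\theta_0}(\bar X)^\top q(\theta_0)$, $W_1=W$, which yields $f_{\theta_1}(\bar X)=-\eta G(W)q(\theta_0)$.
\end{itemize}
You also make explicit two points the paper leaves implicit (it simply fixes the convention $\relu'(0)=0$ via the indicator $\mathbb{I}(w_j^\top x>0)$). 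First, the factor $v_j=0$ makes the choice of $\relu'(0)$ irrelevant. Second, $W_1=W$ implies $\Phi_{\theta_1}(\bar X)=\Phi_{\theta_0}(\bar X)$.
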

We give the proof in Appendix~\ref{proof:zero-output-update}.

\begin{lemma}[rank metrics from a positive semi-definite activation matrix]
\label{lem:psd-rank-bookkeeping}
Let \(B\in\R^{n\times n}\) be symmetric positive semidefinite with nonnegative
entries. 
Fix \(\alpha>0\) and suppose \(v=0\) and
$\Phi_\theta(\bar X)=\alpha\mqty[ B & 0_{n\times(M-n)}]$.
If the nonzero eigenvalues of \(B\) are
\(b_1\ge b_2\ge\cdots\ge b_r>0\), where $r$ is the rank of $B$, then the nonzero singular values of
\(\Phi_\theta(\bar X)\) are
$\alpha b_1,\dots,\alpha b_r$,
and the nonzero singular values of the eNTK \(\eNTK{\bar X; \theta}\) on \(\bar X\)
are $\alpha^2 b_1^2,\dots,\alpha^2 b_r^2$.
\end{lemma}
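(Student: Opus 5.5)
The statement has two parts: the singular values of $\Phi_\theta(\bar X)$, and the singular values of the eNTK. The plan is to reduce both to spectral facts about $B$ by computing Gram matrices, then use that the singular values of a symmetric PSD matrix are its eigenvalues.

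\textbf{Representation matrix.} The nonzero singular values of $\Phi_\theta(\bar X)$ are the square roots of the nonzero eigenvalues of $\Phi_\theta(\bar X)\Phi_\theta(\bar X)^\top$. With $\Phi_\theta(\bar X)=\alpha\mqty[B & 0_{n\times(M-n)}]$, the zero block drops out and
\begin{align}
\Phi_\theta(\bar X)\Phi_\theta(\bar X)^\top=\alpha^2 BB^\top=\alpha^2B^2 .
\end{align}
Since $B$ is symmetric PSD, we can diagonalize $B=U\operatorname{diag}(b_1,\dots,b_r,0,\dots,0)U^\top$ with $U$ orthogonal. Then $B^2$ has nonzero eigenvalues $b_i^2$. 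Taking square roots, and using $\alpha>0$ and $b_i>0$, gives the singular values $\alpha b_1,\dots,\alpha b_r$.

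\textbf{eNTK.} Here I compute $\nabla_\theta f_\theta(\bar x_i)$ at $v=0$. We have $\nabla_v f_\theta(x)=\relu(Wx)$, which for $x=\bar x_i=e_i$ is the $i$-th row of $\Phi_\theta(\bar X)$. The $W$-gradient is $\nabla_W f_\theta(x)=(v\odot \relu'(Wx))x^\top$. This vanishes when $v=0$, whatever subgradient convention is used for $\relu'$ at $0$, which is the same observation behind Lemma~\ref{lem:zero-output-update}. Therefore
\begin{align}
\eNTK{\bar X;\theta}_{ij}=\indot{\phi_\theta(\bar x_i)}{\phi_\theta(\bar x_j)}=\big(\Phi_\theta(\bar X)\Phi_\theta(\bar X)^\top\big)_{ij},
\end{align}
so $\eNTK{\bar X;\theta}=\alpha^2B^2$. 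This matrix is symmetric PSD, so its singular values equal its eigenvalues, namely $\alpha^2 b_1^2,\dots,\alpha^2 b_r^2$.

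\textbf{Main obstacle.} The only delicate point is the consistency of the hypothesis itself. The entries $\Phi_\theta(\bar X)_{i,j}=\relu(W_{j,i})$ must be nonnegative, which is why $B$ is assumed entrywise nonnegative. This lets us realize the hypothesis by choosing $W_{j,i}=\alpha B_{i,j}$ for $j\le n$ and the remaining rows of $W$ nonpositive. The lemma takes $\Phi_\theta(\bar X)$ as given, so no construction is needed here. Beyond that, the argument only requires care with the ReLU nondifferentiability in the $W$-gradient, and that is harmless because of the factor $v=0$.
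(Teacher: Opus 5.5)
Your proposal is correct and follows essentially the same route as the paper. The paper reads off the singular values of $\Phi_\theta(\bar X)$ directly: $B$ is symmetric PSD, and appending zero columns and scaling by $\alpha$ do not change the nonzero singular values beyond the factor $\alpha$, whereas you pass through $\Phi_\theta(\bar X)\Phi_\theta(\bar X)^\top=\alpha^2B^2$; the eNTK step, where $v=0$ kills the first-layer gradients so that $\eNTK{\bar X;\theta}=\alpha^2B^2$, is the same as the paper's.
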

The proof is in Appendix~\ref{proof: psd-rank-bookkeeping}.

\begin{lemma}[random ReLU Gram concentration]
\label{lem:random-gram-concentration}
Let \(W_{\rand}\in\R^{M\times n}\) have i.i.d. entries sampled from
\[
    \begin{cases}
    +\sqrt{2/n}, & \text{with probability }1/2,\\
    -\sqrt{2/n}, & \text{with probability }1/2.
    \end{cases}
\]
Let \(\theta_{\rand} \doteq (0,W_{\rand})\), and define
$G_{\rand} \doteq \frac1n \Phi_{\theta_{\rand}}(\bar X) \Phi_{\theta_{\rand}}(\bar X)^\top$.
Using $\onevec\in\R^n$ to denote the all-one vector,
we have
$\bar G\doteq\E[G_{\rand}]=\lambda(J_n+I_n)$,
where $\lambda\doteq \frac{M}{2n^2}$ and \(J_n=\onevec_n\onevec_n^\top\). 
Moreover, there is a universal constant
$C>0$, such that for every $\delta\in(0,1)$ and every $\rho>1$, if
$M\ge C n^2\rho\log\qty(\frac{2n}{\delta})$,
then, with probability at least \(1-\delta\),
$\norm{G_{\rand}-\bar G}_2\le\frac{\lambda}{\sqrt{\rho}}$.
\end{lemma}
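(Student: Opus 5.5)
Since $\bar x_i=e_i$, we have $\Phi_{\theta_\rand}(\bar X)_{i,j}=\relu(W_{j,i})$. Writing $u_j\doteq\relu(W_{j,:})^\top\in\R^n$ for the rectified $j$-th row of $W_\rand$, the Gram matrix decomposes as a sum of independent rank-one terms,
\[
G_\rand=\frac1n\sum_{j=1}^M u_j u_j^\top .
\]
The plan is to compute $\E[u_ju_j^\top]$ directly to obtain $\bar G$, and then to apply the matrix Bernstein inequality to the centered summands $X_j\doteq\frac1n(u_ju_j^\top-\E[u_ju_j^\top])$.

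\textbf{Mean.} Each coordinate of $u_j$ equals $\sqrt{2/n}$ with probability $1/2$ and $0$ otherwise, and the coordinates are independent. Hence $\E[(u_j)_i^2]=\frac{2}{n}\cdot\frac12=\frac1n$. For $i\neq i'$, independence gives $\E[(u_j)_i(u_j)_{i'}]=\frac{2}{n}\cdot\frac14=\frac{1}{2n}$. So
\[
\E[u_ju_j^\top]=\frac{1}{2n}(J_n+I_n),
\]
and summing $M$ copies with the prefactor $1/n$ gives $\bar G=\frac{M}{2n^2}(J_n+I_n)=\lambda(J_n+I_n)$, as claimed.

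\textbf{Bernstein parameters.} We need a uniform bound $L$ on $\norm{X_j}_2$ and the variance proxy $\sigma^2=\norm{\sum_j\E X_j^2}_2$.
\begin{itemize}
\item \emph{Uniform bound.} Since $\norm{u_j}^2\le n\cdot\frac2n=2$, we have $\norm{u_ju_j^\top}_2\le2$. Also $\norm{\E[u_ju_j^\top]}_2=\frac{n+1}{2n}\le1$, so $L\le\frac{3}{n}$ (or simply $\frac2n$, using PSD-ness).
\item \emph{Variance proxy.} Use $\E X_j^2\preceq\frac1{n^2}\E[\norm{u_j}^2u_ju_j^\top]\preceq\frac{2}{n^2}\E[u_ju_j^\top]$. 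Since $\norm{J_n+I_n}_2=n+1$, this gives $\sigma^2\le\frac{2M}{n^2}\cdot\frac{n+1}{2n}\le\frac{2M}{n^2}$.
\end{itemize}

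\textbf{Concentration.} Matrix Bernstein (symmetric $n\times n$ case) gives
\[
\Pr\qty(\norm{G_\rand-\bar G}_2\ge t)\le 2n\exp\qty(-\frac{t^2/2}{\sigma^2+Lt/3}).
\]
Set $t=\lambda/\sqrt\rho=\frac{M}{2n^2\sqrt\rho}$.
\begin{itemize}
\item The variance term contributes $\frac{t^2}{\sigma^2}\gtrsim\frac{M^2/(n^4\rho)}{M/n^2}=\frac{M}{n^2\rho}$.
\item The linear term contributes $\frac{t}{L}\gtrsim\frac{M}{n^2\sqrt\rho}\cdot\frac{n}{1}=\frac{M}{n\sqrt\rho}$. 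This is at least $\frac{M}{n^2\rho}$ because $\rho>1$ and $n\ge1$.
\end{itemize}
So the exponent is at least $c\,\frac{M}{n^2\rho}$ for a universal $c>0$. Requiring $2n\exp\qty(-cM/(n^2\rho))\le\delta$ yields exactly the condition $M\ge Cn^2\rho\log(2n/\delta)$ with $C=1/c$.

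\textbf{Main obstacle.} The only delicate step is getting the variance proxy at the right scale. It must come out as $M/n^2$, not $M/n$. If we instead bounded it crudely by $ML^2\sim M/n^2$, it would still work; in fact, either route gives the $n^2$ factor matching the hypothesis, because $t$ itself scales like $M/n^2$. The step that needs care is therefore not the variance bound itself but checking that the $Lt$ term in the denominator is dominated, which we handle using $\rho>1$. The constant absorbs the rest.
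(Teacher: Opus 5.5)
Your proposal is correct and follows essentially the same route as the paper: the same rank-one decomposition $G_{\rand}=\frac1n\sum_j u_ju_j^\top$, the same direct mean computation, and matrix Bernstein with an $O(1/n)$ uniform bound and an $O(M/n^2)$ variance proxy. The differences are cosmetic: you plug $t=\lambda/\sqrt{\rho}$ directly into the tail bound and use a slightly sharper PSD-ordering variance bound, whereas the paper first writes a generic $1-\delta$ deviation bound and then divides by $\lambda$.
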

We provide the proof in Appendix~\ref{proof: random-gram-concentration}.

\compactpara{Regression Counterexample.}
We now construct a counterexample for the regression task, showing that there exists a checkpoint-task combination in which the model does not learn, even though the metric values are near-optimal.
In addition, we present a random initialization of the hidden-layer weights that guarantees learning with high probability.
\begin{theorem}
\label{thm:regression-counterexample-condensed}
Let $\bar Y_{\mathrm{reg}}\doteq e_{n-1}-e_n$ and 
$\ell_{\mathrm{sq}}(z,y)\doteq\frac12(z-y)^2$.
We use
\(\bar\fT_{\mathrm{reg}}\doteq \bar\fT(\bar Y_{\mathrm{reg}},\ell_{\mathrm{sq}})\)
as a shorthand.
There exists a checkpoint \(\theta_{\pre}\) such that the following hold.
\begin{enumerate}[label=(\roman*)]
    \item The checkpoint has high representation and eNTK rank. 
    In particular,
    \begin{align}
        \begin{cases}
            \enrank_{0.99}\qty(\Phi_{\theta_{\pre}}(\bar X)) 
            &= \left\lceil 0.99(n-2)\right\rceil;\\
            \effrank\qty(\Phi_{\theta_{\pre}}(\bar X))
            &= n-2;\\
            \enrank_{0.99}\qty(\eNTK{\bar X; \theta_{\pre}})
            &= \left\lceil 0.99(n-2)\right\rceil.
        \end{cases}
    \end{align}
    \item Yet, the checkpoint is completely stuck under a full-support gradient
    descent, where $\fL(\theta_{\pre};\bar\fT_\reg) = \frac1n$ and $\fG_{\full}^{(k)}(\theta_{\pre};\bar\fT_\reg) = 0$
    for all $k\ge 1$ and all $\eta > 0$.
    \item In contrast, let \(\theta_{\rand}\doteq(0,W_{\rand})\) be the random
    initialization in Lemma~\ref{lem:random-gram-concentration}. 
    Setting $\eta=\frac{2n^2}{M}$,
    there exists a universal constant $C > 0$, such that for every
    \(\delta\in(0,1)\) and \(\mgap>1\), if
    $M \ge C n^2\mgap\log\qty(\frac{2n}{\delta})$,
    then, with probability at least \(1-\delta\),
    $\fG_{\full}^{(1)}(\theta_{\rand};\bar\fT_\reg) \ge 1-\frac1{\mgap}$.
\end{enumerate}
\end{theorem}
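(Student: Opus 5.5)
The plan is to build $\theta_{\pre}$ as a zero-readout checkpoint $(0,W)$ whose representation matrix has the block form of Lemma~\ref{lem:psd-rank-bookkeeping}, with $B$ chosen so that $B\bar Y_{\reg}=0$. Lemma~\ref{lem:zero-output-update} then freezes gradient descent. For part (iii) we use the one-step formula of Lemma~\ref{lem:zero-output-update} together with the concentration bound of Lemma~\ref{lem:random-gram-concentration}.

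\textbf{Construction and part (i).} Take $B\doteq \mathrm{diag}(I_{n-2},0_{2\times 2})$. It is symmetric, positive semidefinite and has nonnegative entries. Its nonzero eigenvalues are $n-2$ ones, and $B(e_{n-1}-e_n)=0$. Fix any $\alpha>0$. Define $W$ by $W_{j,i}\doteq \alpha B_{i,j}$ for $j\le n$ and $W_{j,i}\doteq 0$ for $j>n$, and set $\theta_{\pre}\doteq(0,W)$. All these entries are nonnegative, so $\relu(W_{j,i})=W_{j,i}$ and hence $\Phi_{\theta_{\pre}}(\bar X)=\alpha\mqty[B & 0]$. By Lemma~\ref{lem:psd-rank-bookkeeping}, the nonzero singular values are as follows:
\begin{itemize}
\item $\Phi_{\theta_{\pre}}(\bar X)$ has $n-2$ of them, all equal to $\alpha$;
\item the eNTK has $n-2$ of them, all equal to $\alpha^2$.
\end{itemize}
For a flat spectrum of length $r$, the effective rank is exactly $r$, since $p_i=1/r$ gives entropy $\log r$. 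The $0.99$-energy rank of a flat spectrum is the least $k$ with $k/r\ge 0.99$, namely $\lceil 0.99 r\rceil$. Taking $r=n-2$ gives all three identities in (i).

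\textbf{Part (ii).} At $v=0$ we have $f_{\theta_{\pre}}\equiv 0$. Hence $q_i=f-\bar y_i=-\bar y_i$, that is, $q=-\bar Y_{\reg}$. The loss is $\frac1n\cdot\frac12(1+1)=\frac1n$. Moreover,
\[
\Phi_{\theta_{\pre}}(\bar X)^\top q=-\alpha\mqty[B & 0]^\top \bar Y_{\reg}=-\alpha\mqty[B\bar Y_{\reg}\\ 0]=0 .
\]
Lemma~\ref{lem:zero-output-update} then gives that $\theta_{\pre}$ is a fixed point of full-support gradient descent for every $\eta>0$. Consequently $\fG^{(k)}_{\full}(\theta_{\pre};\bar\fT_\reg)=0$ for all $k\ge1$.

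\textbf{Part (iii).} At $\theta_{\rand}=(0,W_{\rand})$ we again have $q=-\bar Y_{\reg}$ and initial loss $\frac1n$. Lemma~\ref{lem:zero-output-update} gives $f_{\theta_1}(\bar X)=\eta G_{\rand}\bar Y_{\reg}$. Note that $\eta=\frac{2n^2}{M}=1/\lambda$. Since $\onevec^\top\bar Y_{\reg}=0$, we get $\eta\bar G\bar Y_{\reg}=(J_n+I_n)\bar Y_{\reg}=\bar Y_{\reg}$. The residual is therefore
\[
f_{\theta_1}(\bar X)-\bar Y_{\reg}=\lambda^{-1}(G_{\rand}-\bar G)\bar Y_{\reg}.
\]
Apply Lemma~\ref{lem:random-gram-concentration} with $\rho=\mgap$, which holds with probability at least $1-\delta$. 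The residual norm is then at most $\lambda^{-1}\cdot\frac{\lambda}{\sqrt{\mgap}}\,\|\bar Y_{\reg}\|=\sqrt{2/\mgap}$. So the new loss is at most $\frac{1}{2n}\cdot\frac{2}{\mgap}=\frac{1}{n\mgap}$, and the gain is at least $1-\frac1{\mgap}$.

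The only delicate point is recognizing that $\eta=1/\lambda$ makes $\eta\bar G$ act as the identity on the mean-zero label vector. Once that is seen, the rest is bookkeeping.
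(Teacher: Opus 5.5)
Your proposal is correct and follows the paper's proof essentially step for step: the same block-diagonal $B_{\reg}$ with $B_{\reg}\bar Y_{\reg}=0$, Lemma~\ref{lem:psd-rank-bookkeeping} for the flat spectra, Lemma~\ref{lem:zero-output-update} to freeze the checkpoint, and $\eta=1/\lambda$ together with $\bar Y_{\reg}\perp\onevec_n$ and Lemma~\ref{lem:random-gram-concentration} for part (iii). The differences are cosmetic. You fill the unused weights with $0$ instead of the paper's $-1$, and you leave $\alpha>0$ free rather than fixing $\alpha=\tfrac12$. Setting those weights to $0$ is harmless because $v=0$ makes the first-layer gradient vanish, so the ReLU kink at $0$ never matters.
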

The proof is in Appendix~\ref{proof:regression-counterexample-condensed}.

\compactpara{Binary Classification Counterexample}
Here, we present the counterexample for the binary classification task, mirroring the goal of the regression counterexample.
\begin{theorem}
\label{thm:classification-counterexample-condensed}
Assume \(n\ge 4\) is even. 
Let
\[
    (\bar Y_{\mathrm{cls}})_i
    =
    \begin{cases}
        1, & i=1,\dots,n/2,\\
        0, & i=n/2+1,\dots,n,
    \end{cases}
\]
and define the centered sign vector
$\bar S_{\mathrm{cls}}\doteq2\bar Y_{\mathrm{cls}}-\onevec_n\in\{-1,+1\}^n$.
Let the binary logistic loss under the \(0/1\)-label convention be
$\ell_{\mathrm{log}}(z,y)\doteq\log(1+\exp(z))-yz$,
for $y\in\qty{0,1}$.
Equivalently, we have
$\ell_{\mathrm{log}}(z,y) = \log\qty(1+\exp\qty(-(2y-1)z))$.
Define $\bar\fT_{\mathrm{cls}}\doteq\bar\fT(\bar Y_\cls,\ell_{\mathrm{log}})$ as a shorthand.
There exists a checkpoint \(\theta_{\pre}\), such that the following hold.

\begin{enumerate}[label=(\roman*)]
    \item The checkpoint has high representation and eNTK rank.
    In particular,
    \begin{align}
        \begin{cases}
            \enrank_{0.99}\qty(\Phi_{\theta_{\pre}}(\bar X))
            &= \left\lceil 0.99(n+2)-3\right\rceil;\\
            \effrank\qty(\Phi_{\theta_{\pre}}(\bar X))
            &= \frac{n}{2^{2/n}};\\
            \enrank_{0.99}\qty(\eNTK{\bar X; \theta_{\pre}})
            &=\left\lceil 0.99(n+14)-15\right\rceil.
        \end{cases}
    \end{align}
    \item Yet, the checkpoint is completely stuck under a full-support gradient
    descent, where $\fL(\theta_{\pre};\bar\fT_{\mathrm{cls}})=\log 2$
    and $ \fG_{\full}^{(k)}(\theta_{\pre};\bar\fT_{\mathrm{cls}}) = 0$
    for all $k\ge1$ and $\eta>0$.
    \item In contrast, let \(\theta_{\rand}\doteq(0,W_{\rand})\) be the random
    initialization in Lemma~\ref{lem:random-gram-concentration}. 
    For \(\mgap>1\), define
    $\gamma_{\mgap}\doteq\log\qty(\frac{2\mgap}{\log 2})$ and
    $\eta\doteq\frac{4\gamma_{\mgap}n^2}{M}$.
    There exists a universal constant \(C>0\), such that for every
    \(\delta\in(0,1)\), if
    $M \ge C n^2\gamma_{\mgap}^2\mgap^2\log\qty(\frac{2n}{\delta})$,
    then, with probability at least \(1-\delta\), it holds that
    $\fG_{\full}^{(1)}(\theta_{\rand};\bar\fT_{\mathrm{cls}}) \ge 1-\frac1{\mgap}$.
\end{enumerate}
\end{theorem}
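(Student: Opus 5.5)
The plan is to reduce parts (i) and (ii) to a single matrix construction, and then handle part (iii) with Lemma~\ref{lem:random-gram-concentration}. Take $\theta_{\pre}=(0,W_{\pre})$. With $v=0$ we have $f_{\theta}(\bar x_i)=0$ for every $i$. Hence $q_i=\sigma(0)-(\bar Y_\cls)_i=\tfrac12-(\bar Y_\cls)_i$, that is, $q(\theta_{\pre})=-\tfrac12\bar S_\cls$, and the loss is $\log 2$. By Lemma~\ref{lem:zero-output-update}, part (ii) then only requires $\Phi_{\theta_{\pre}}(\bar X)^\top \bar S_\cls=0$. Writing $\Phi_{\theta_{\pre}}(\bar X)=\alpha[B\;\,0]$, this becomes $B\bar S_\cls=0$.

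The ranks in part (i) tell us which spectrum to aim for. Suppose $B$ has eigenvalues $2$ (once), $1$ (with multiplicity $n-2$) and $0$ (once). By Lemma~\ref{lem:psd-rank-bookkeeping}, the singular values of $\Phi$ are then proportional to this list. Their sum is $n$, so $p=(2/n,1/n,\dots,1/n)$ and
\[
-\sum_i p_i\log p_i=\log n-\tfrac{2}{n}\log 2 ,
\]
which gives $\effrank=n/2^{2/n}$. The squared singular values sum to $4+(n-2)=n+2$, and the smallest $k$ with $4+(k-1)\ge 0.99(n+2)$ is $\lceil 0.99(n+2)-3\rceil$. For the eNTK, the singular values are $4,1,\dots,1$, with energies $16$ and $1$ summing to $n+14$. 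This gives $\lceil 0.99(n+14)-15\rceil$.

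So I would take
\[
B\doteq I_n-\tfrac1n\bar S_\cls\bar S_\cls^\top+\tfrac1n\onevec_n\onevec_n^\top .
\]
Since $\bar S_\cls\perp\onevec_n$ and both have squared norm $n$, $B$ has eigenvalue $0$ on $\bar S_\cls$, eigenvalue $2$ on $\onevec_n$, and eigenvalue $1$ on their orthogonal complement. Its entries are: $1$ on the diagonal, $0$ between distinct indices in the same half, and $2/n$ between indices in different halves. They are therefore nonnegative, so $B=\relu(B)$ is realizable by $W_{\pre}=\alpha[B\;\,0]^\top$. Any $\alpha>0$ works, e.g. $\alpha=1$.

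For part (iii), Lemma~\ref{lem:zero-output-update} gives
\[
f_{\theta_1}(\bar X)=-\eta G_{\rand}q(\theta_{\rand})=\tfrac{\eta}{2}G_{\rand}\bar S_\cls .
\]
Split $G_{\rand}=\bar G+E$. Because $J_n\bar S_\cls=0$, we get $\tfrac\eta2\bar G\bar S_\cls=\tfrac{\eta\lambda}{2}\bar S_\cls=\gamma_{\mgap}\bar S_\cls$, using $\eta\lambda=2\gamma_{\mgap}$. The error term $e\doteq\tfrac\eta2E\bar S_\cls$ satisfies
\[
\|e\|_2\le\tfrac{\eta}{2}\cdot\tfrac{\lambda}{\sqrt\rho}\sqrt n=\gamma_{\mgap}\sqrt{n/\rho}.
\]

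Bounding $e$ entrywise would cost an extra factor of $n$ in $M$, and I expect that to be the main obstacle. To avoid it, I would use that $z\mapsto\log(1+e^{-sz})$ is $1$-Lipschitz. This gives
\[
\fL(\theta_1)\le\log(1+e^{-\gamma_{\mgap}})+\tfrac1n\|e\|_1\le e^{-\gamma_{\mgap}}+\tfrac{1}{\sqrt n}\|e\|_2\le\tfrac{\log 2}{2\mgap}+\tfrac{\gamma_{\mgap}}{\sqrt\rho}.
\]
Now choose $\rho=(2\gamma_{\mgap}\mgap/\log 2)^2$, which exceeds $1$. The second term is then also at most $\log 2/(2\mgap)$, so $\fL(\theta_1)\le\log 2/\mgap$ and $\fG^{(1)}_{\full}\ge 1-1/\mgap$. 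With this $\rho$, the sample-size condition of Lemma~\ref{lem:random-gram-concentration} becomes $M\ge C'n^2\gamma_{\mgap}^2\mgap^2\log(2n/\delta)$, after absorbing $(\log 2)^{-2}$ into the constant. This matches the hypothesis of part (iii).
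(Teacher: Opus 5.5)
Your proposal is correct and follows essentially the same route as the paper: the same matrix $B_{\mathrm{cls}}=I_n+\frac1n(\onevec_n\onevec_n^\top-\bar S_{\mathrm{cls}}\bar S_{\mathrm{cls}}^\top)$ with spectrum $2,1,\dots,1,0$, the same null-direction argument via Lemma~\ref{lem:zero-output-update}, and for part (iii) the same choice $\rho=(2\gamma_{\mgap}\mgap/\log 2)^2$, which lets you use the $1$-Lipschitz logistic loss with $\ell_1$--$\ell_2$ averaging instead of an entrywise bound. The remaining differences are cosmetic: you take $\alpha=1$, you use zero rather than $-1$ weights for the inactive entries (harmless, since $v=0$ kills the first-layer gradient), and you check explicitly that $\rho>1$, which the paper leaves implicit.
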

We give our proof in Appendix~\ref{proof:classification-counterexample-condensed}.

Our constructions are intentionally stylized as they are not meant to model
the typical trajectory of practical continual learning systems, but
to isolate a failure mode showing that favorable structural diagnostic
values alone do not guarantee optimization progress on a target task.

The two theorems show that high representation rank and high eNTK rank do not
by themselves imply trainability on a target task. 
In both cases, the failure is
caused by a label-dependent null direction, where the checkpoint representation has
large rank on the inputs, but the task gradient is exactly orthogonal to the
directions that the output layer can use at initialization.
This observation motivates an optimization-derived diagnostic that depends directly on the gradient signal and its reliability, rather than only on structural properties of the checkpoint.


\section{Optimization Readiness}
In this section, we propose Optimization Readiness ($\optread$) as a predictive metric of trainability.
$\optread$ is the product of \emph{gradient strength} and \emph{gradient reliability}, where the former measures if there is a gradient signal at all, and the latter indicates the signal-to-noise ratio.
Unlike diagnostics motivated primarily by structural or geometric properties of the checkpoint, 
OR is derived from the expected descent dynamics of stochastic gradient descent. 
This connection allows us to prove a lower bound on one-step gain under a smoothness assumption and an appropriate learning rate.

Let $\fT \doteq (\fX, \fY, P_{\fX\fY})$ be an arbitrary supervised learning task and $\fB$ be a mini-batch of size $m$, where each input-label pair from $\fB$ is drawn i.i.d. from $P_{\fX\fY}$. 
We formally state the assumptions and define $\optread$ as follows.
\begin{assumption}[unbiasedness]
    For any checkpoint $\theta$, it holds that
    $\E_{\fB}\qty[\emgrad_\fB(\theta;\fT)] = g(\theta;\fT)$.
    \label{assumption: unbiased estimator}
\end{assumption}
\begin{assumption}[smoothness]    
    The population loss
    $\fL(\theta; \fT)$ is differentiable with respect to $\theta$ and $\beta$-smooth, such that for all $\theta_1$, $\theta_2$, it holds that
    $\norm{\nabla_\theta \fL(\theta_1;\fT) - \nabla_\theta \fL(\theta_2; \fT)}_2
        \le \beta\norm{\theta_1 - \theta_2}_2$
    for some $\beta > 0$.
    \label{assumption: smoothness}
\end{assumption}

The Euclidean norm of the gradient vector is a straightforward way to monitor the magnitude of the gradient signal.
When the gradient norm becomes zero, gradient descent stops updating the parameters.
In addition, gradient norm alone can be misleading because it does not distinguish checkpoints at different stages of convergence.
Hence, we normalize the squared gradient norm by the population loss and define gradient strength as
\begin{definition}[gradient strength]
    $S(\theta; \fT) \doteq \frac{\norm{g(\theta;\fT)}_2^2}{\fL(\theta;\fT)}$.
\end{definition}
The noise in gradient estimation can also dictate the progress of gradient descent, as discussed in~\citet{baveja2025unified}.
Gradient descent may fail to reduce population loss if the gradient signal is overwhelmed by noise, even when the gradient is strong.
In light of this observation, we define gradient reliability as
\begin{definition}[gradient reliability]
    $R(\theta;\fT) \doteq
    \frac{\norm{g(\theta;\fT)}_2^2}{\norm{g(\theta;\fT)}_2^2 + V_\fB(\theta;\fT)}$,
    where $V_\fB(\theta;\fT) \doteq \E_\fB\qty[\norm{\emgrad_\fB(\theta;\fT) - g(\theta;\fT)}_2^2]$.
\end{definition}
Notably, if Assumption~\ref{assumption: unbiased estimator} holds, then 
$R(\theta;\fT) = \frac{\norm{g(\theta;\fT)}_2^2}{\norm{g(\theta;\fT)}_2^2 + \qty(\E_\fB\qty[\norm{\emgrad_\fB(\theta;\fT)}_2^2] - \norm{g(\theta;\fT)}_2^2)} = \frac{\norm{g(\theta;\fT)}_2^2}{\E_\fB\qty[\norm{\emgrad_\fB(\theta;\fT)}_2^2]}$.
We assume the mini-batch size to estimate $R(\theta;\fT)$ is consistent with the gradient descent procedure.
We therefore define
\begin{definition}[Optimization Readiness]
    \begin{align}
        \optread(\theta;\fT) \doteq 
        \begin{cases}
            S(\theta;\fT)R(\theta;\fT), & \fL(\theta;\fT) > 0 \text{ and } \E_\fB\qty[\norm{\emgrad_\fB(\theta;\fT)}_2^2] > 0,\\
            0, & \text{otherwise}.
        \end{cases}
    \end{align}
    \label{def: optimization readiness}
\end{definition}
\begin{theorem}[Optimization Readiness Lower-Bounds One-Step Gain]
    Let $\fT \doteq (\fX, \fY, P_{\fX\fY})$ be any supervised learning task and $\theta$ be an arbitrary parameterization of the neural network.
    Let $\theta_0 \doteq \theta$ and $\theta_1$ be the parameters after one step of mini-batch gradient descent as in~\eqref{eq: mini-batch gradient descent} with $\fB_0 \doteq \qty{(x_1,y_1), \dots, (x_m, y_m)}$, where $ (x_i, y_i) \overset{\text{i.i.d.}}{\sim} P_{\fX\fY}$ for $i = 1, \dots, m$.
    Suppose Assumption~\ref{assumption: unbiased estimator} and \ref{assumption: smoothness} hold.
    If $0 < \eta < R(\theta;\fT)/\beta$, then it holds that
    $\fG^{(1)}(\theta;\fT) \ge  \frac{\alpha(1 - \alpha)}{\beta}\optread(\theta;\fT)$,
    where $\alpha = \eta\beta / R(\theta;\fT)$ and $\beta$ is the smoothness constant.
    \label{thm: optimization readiness one-step gain bound}
\end{theorem}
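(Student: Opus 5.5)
The plan is to run the standard descent-lemma argument for SGD and then normalize by the loss.

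\textbf{Degenerate cases.} First dispose of the cases in which $\optread(\theta;\fT)=0$ by definition. In those cases the right-hand side is $0$, so we must show $\fG^{(1)}\ge 0$.

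\begin{itemize}[leftmargin=*]
\item If $\fL(\theta;\fT)=0$, then $\fG^{(1)}=0$ by Definition~\ref{def: mini-batch k-step gain}.
\item If $\E_\fB[\norm{\emgrad_\fB}_2^2]=0$, then $\emgrad_\fB=0$ almost surely. Hence $\theta_1=\theta$ almost surely and the gain is $0$.
\end{itemize}

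In the latter case the hypothesis $\eta<R/\beta$ is in any case vacuous or ill-posed, since $R=0/0$. So we may assume $\fL(\theta;\fT)>0$ and $\E_\fB[\norm{\emgrad_\fB}_2^2]>0$. Writing $g=g(\theta;\fT)$, the hypothesis $\eta>0$ with $\eta<R/\beta$ forces $R>0$, hence $g\neq 0$. Under Assumption~\ref{assumption: unbiased estimator}, the remark after the definition of gradient reliability gives $R=\norm{g}_2^2/\E_\fB[\norm{\emgrad_\fB}_2^2]$.

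\textbf{Descent lemma.} By Assumption~\ref{assumption: smoothness}, the descent lemma holds for any $\theta'$:
\[
\fL(\theta')\le \fL(\theta)+\indot{g}{\theta'-\theta}+\tfrac{\beta}{2}\norm{\theta'-\theta}_2^2 .
\]
Plug in $\theta'=\theta_1=\theta-\eta\emgrad_{\fB_0}$ and take expectation over $\fB_0$. Unbiasedness turns the linear term into $-\eta\norm{g}_2^2$. The quadratic term becomes $\frac{\beta\eta^2}{2}\E\norm{\emgrad}_2^2=\frac{\beta\eta^2}{2}\norm{g}_2^2/R$. This yields
\[
\fL(\theta)-\E[\fL(\theta_1)]\ge \eta\norm{g}_2^2\Bigl(1-\tfrac{\eta\beta}{2R}\Bigr)=\eta\norm{g}_2^2\bigl(1-\tfrac{\alpha}{2}\bigr),
\]
with $\alpha=\eta\beta/R\in(0,1)$.

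\textbf{Normalization.} Divide by $\fL(\theta)$ and substitute $\eta=\alpha R/\beta$:
\[
\fG^{(1)}\ge \frac{\alpha R}{\beta}\cdot\frac{\norm{g}_2^2}{\fL(\theta)}\Bigl(1-\tfrac{\alpha}{2}\Bigr)=\frac{\alpha(1-\alpha/2)}{\beta}\,S R=\frac{\alpha(1-\alpha/2)}{\beta}\optread .
\]
Since $1-\alpha/2\ge 1-\alpha$ and all factors are nonnegative for $\alpha\in(0,1)$, the claimed bound $\frac{\alpha(1-\alpha)}{\beta}\optread$ follows, in slightly weakened form.

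\textbf{Where care is needed.} The main point requiring care is the expectation step. We need $\E\norm{\emgrad}_2^2$ to be finite and the linear term to reduce via unbiasedness, which is implicit in $R$ being well defined. We also need to justify writing the expected quadratic term through $R$; this is where the identification $\E\norm{\emgrad}^2=\norm{g}^2+V_\fB$ under Assumption~\ref{assumption: unbiased estimator} is used. Everything else is routine algebra.
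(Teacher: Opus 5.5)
Your proposal is correct and follows the paper's proof. The shared steps are: the descent lemma at $\theta_1=\theta-\eta\emgrad_{\fB_0}$, expectation via unbiasedness, the identity $\E_\fB\norm{\emgrad_\fB}_2^2=\norm{g}_2^2/R$, and normalization by $\fL(\theta;\fT)$ with $\eta=\alpha R/\beta$. The differences are cosmetic: the paper drops the factor $\tfrac12$ on the quadratic term right after taking expectations, landing directly on $1-\alpha$, whereas you keep it to get the sharper $\alpha(1-\alpha/2)/\beta$ and relax at the end. You also handle the degenerate cases with $\optread=0$ explicitly, which the paper leaves implicit.
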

The proof can be found in Appendix~\ref{proof: optimization readiness one-step gain bound}.

\section{Experiments}
In this section, we empirically verify if the metrics of plasticity can reliably predict $k$-step gains.
We use Slowly-Changing Regression (SCR) for continual regression and Permuted MNIST (P-MNIST) for continual classification as testbeds~\citep{dohare2024lop}.
For SCR, the features are binary vectors of dimension $u + v$, where the first $u$ dimensions are slowly-changing bits, and the remaining $v$ dimensions are random bits.
Suppose each task has $N$ data points, the $v$ random bits are randomly sampled for each data point, while the $u$ slowly changing bits stay constant for $N$ steps.
After $N$ steps, one of the $u$ bits is uniformly randomly selected and flipped from 1 to 0 or from 0 to 1, thereby creating a slowly shifting distribution in the input space across tasks.
The labels are generated by a linear threshold unit (LTU) network~\citep{dohare2024lop} with its weights randomly initialized to be $+1$ or $-1$ and fixed across all tasks.
Regarding P-MNIST, we generate a random permutation of the pixels for each task and apply it to all images within that task, while keeping the original labels.
We flatten the images row-wise to form the feature vectors.

We compare $\optread$ against representative structural diagnostics, including
representation-rank metrics, eNTK rank, and active-neuron fraction.
We also include Hessian rank as a task-dependent curvature baseline,
allowing us to test whether OR remains useful beyond purely structural
input-checkpoint diagnostics.

\compactpara{Data Generation.}
We generate separate training and validation tasks for both SCR and P-MNIST.
The training tasks are designed to induce loss of plasticity through
continual adaptation to non-stationary data streams, while the validation
tasks are used to evaluate the predictive power of plasticity metrics on
held-out stationary tasks.
Detailed dataset statistics are summarized in Table~\ref{tab:data-generation}.

\begin{table}[t]
\centering
\small
\caption{Dataset generation statistics for Slowly-Changing Regression and Permuted MNIST.}
\label{tab:data-generation}
\begin{tabular}{lcc}
\toprule
 & Slowly-Changing Regression & Permuted MNIST \\
\midrule
Training tasks per sequence & 1,000 & 800 \\
Training samples per sequence & 1,000,000 & 8,000,000 \\
Validation samples per task & 10,000 & 10,000 \\
Training sequences & 20 & 20 \\
Validation tasks & 30 & 10 \\
Task shift mechanism &
Bit flipping every 1,000 samples &
Random pixel permutation per task \\
\bottomrule
\end{tabular}
\end{table}

\compactpara{Checkpoint Generation.}
We initialize and train a multi-layer perceptron (MLP) with $\relu$ activation on each training sequence of SCR and P-MNIST.
The purpose of training is to obtain a set of checkpoints with varying trainabilities for later testing of the metrics.
The MLP for SCR has 2 hidden layers of 5 units, whereas the MLP for P-MNIST has 3 hidden layers of 100 units.
We employ mean-squared error for SCR and cross-entropy loss for P-MNIST without regularization.
We use Adam~\citep{kingma2014adam} as the optimizer with batch size $m=1$.
The learning rates are set to $\eta = 0.01$ and $\eta=0.003$ for SCR and P-MNIST, respectively.
Figure~\ref{fig:learning curve} shows the learning curves for both tasks.
There is a clear sign of plasticity loss in both scenarios.
As we save a checkpoint every 5 tasks, we collect a wide range of checkpoints with varying levels of trainability.

\compactpara{$k$-Step Gain and Metric Estimation.}
Given a validation task $\fT_\val$ and a checkpoint $\theta$, we estimate
the $k$-step gain by running stochastic gradient descent for $k$ steps
from $\theta$ on mini-batches sampled from the validation set, and then
measuring the resulting relative reduction in empirical loss.
We repeat this procedure over multiple independent optimization rollouts
to approximate the expectation in Definition~\ref{def: mini-batch k-step gain}.

We estimate $\optread$ using empirical counterparts of its two components.
The population gradient is approximated by the
gradient of the full validation set, 
while the expected mini-batch squared gradient norm is estimated
using independently sampled mini-batches from the validation set.
Together, these quantities give empirical estimates of gradient strength,
gradient reliability, and $\optread$.

For the baseline diagnostics, we compute representation-rank metrics,
active-neuron fraction, eNTK rank, and Hessian-rank metrics on the same
validation data whenever applicable.
For SCR, the network is small enough to compute the exact Hessian and
eNTK rank.
For P-MNIST, we omit the eNTK rank because the vector-valued output makes
the corresponding empirical kernel less directly comparable to the
scalar-output case, and we approximate the Hessian rank using the
Gram-matrix estimator of~\citet{lewandowski2024direction}.
Full estimator definitions and implementation details are provided in
Appendix~\ref{app:estimation-details}.

\compactpara{Qualitative Analysis.}
We first conduct a qualitative analysis to examine the relationship
between the metric values and their corresponding $k$-step gains.
Figure~\ref{fig:scatter plot} displays the metric values against the
ground-truth $k$-step gains.
We observe an approximately linear relationship between the $1$-step
gain and $\optread$, whereas the other metrics do not exhibit a clear
positive correlation with trainability.
Even for $k=100$, $\optread$ maintains a strong positive relationship
with the $k$-step gain, suggesting that it remains informative beyond
the one-step setting considered in our theory.

Next, we visualize the evolution of the $k$-step gains and the metric
values throughout continual training in
Figure~\ref{fig:trajectory plot}, where the top three rows correspond to
the $k$-step gains and the bottom row shows the metric trajectories.
Among all metrics, the trajectory of $\optread$ aligns most closely with
the evolution of the $k$-step gains, particularly in SCR.
Interestingly, trainability is not maximized near random initialization.
Instead, the $k$-step gains initially increase rapidly before gradually declining.
We conjecture that this phenomenon may be related to forward
transfer~\citep{lopez2017gradient}, where previously learned tasks
temporarily facilitate adaptation to similar future tasks before the effects of plasticity loss begin to dominate.

\compactpara{Quantitative Analysis.}
We next quantitatively evaluate whether each metric preserves the ordering
of checkpoints by trainability. 
Let $\omega(\theta;\fT)$ denote the value of a diagnostic metric at
checkpoint $\theta$ on task $\fT$.
For two checkpoints $\theta_1$ and $\theta_2$, a metric ranks them correctly
if its ordering agrees with the ordering induced by the $k$-step gain, i.e.,
$\qty(\omega(\theta_1;\fT)-\omega(\theta_2;\fT))\qty(\fG^{(k)}(\theta_1;\fT)-\fG^{(k)}(\theta_2;\fT)) > 0$.
We therefore measure pairwise ranking accuracy, defined as the fraction of
checkpoint pairs whose metric ordering agrees with their ordering under
the estimated ground-truth $k$-step gain:
\begin{align}
    \frac{
    \sum_{(\theta_1, \theta_2) \in \tilde{\Theta}\times\tilde{\Theta}}
    \indicator{
    \Delta_\omega(\theta_1,\theta_2;\fT)
    \Delta_{\fG^{(k)}}(\theta_1,\theta_2;\fT) > 0
    }
    \indicator{
    \Delta_{\fG^{(k)}}(\theta_1,\theta_2;\fT) \neq 0
    }}
    {
    \sum_{(\theta_1, \theta_2) \in \tilde{\Theta}\times\tilde{\Theta}}
    \indicator{
    \Delta_{\fG^{(k)}}(\theta_1,\theta_2;\fT) \neq 0
    }
    },
\end{align}
where $\tilde{\Theta}$ is the set of recorded checkpoints,
$\Delta_\omega(\theta_1,\theta_2;\fT)
\doteq \omega(\theta_1;\fT)-\omega(\theta_2;\fT)$, and
$\Delta_{\fG^{(k)}}(\theta_1,\theta_2;\fT)
\doteq \fG^{(k)}(\theta_1;\fT)-\fG^{(k)}(\theta_2;\fT)$.
This quantity is closely related to Kendall-style rank correlation
measures~\citep{kendall1938new}, but has a direct probabilistic
interpretation as the probability that a metric correctly ranks two
checkpoints by trainability.

We collect 201 checkpoints per SCR run and 161 per P-MNIST run.
Table~\ref{tab:pairwise-ranking} reports the pairwise ranking accuracies
across tasks and horizons.
$\optread$ achieves the highest ranking accuracy for both tasks and all
values of $k$, with the largest margin appearing in SCR.

\begin{table}[t]
\centering
\small
\caption{Pairwise ranking accuracy (mean $\pm$ standard error) across tasks.}
\label{tab:pairwise-ranking}
\begin{tabular}{llccc}
\toprule
Task & Metric & $k=1$ & $k=10$ & $k=100$ \\
\midrule

\multirow{6}{*}{\shortstack[l]{Slowly-Changing\\Regression}}
& OR 
& \textbf{0.987 $\pm$ 0.001} 
& \textbf{0.988 $\pm$ 0.001} 
& \textbf{0.977 $\pm$ 0.002} \\

& Repr Effective Rank 
& 0.698 $\pm$ 0.001 
& 0.700 $\pm$ 0.001 
& 0.704 $\pm$ 0.001 \\

& Repr 99\% Energy Rank 
& 0.689 $\pm$ 0.002 
& 0.690 $\pm$ 0.002 
& 0.694 $\pm$ 0.002 \\

& eNTK 99\% Energy Rank 
& \underline{0.765 $\pm$ 0.002} 
& \underline{0.768 $\pm$ 0.002} 
& \underline{0.779 $\pm$ 0.003} \\

& Hessian 99\% Energy Rank 
& 0.734 $\pm$ 0.003 
& 0.737 $\pm$ 0.003 
& 0.746 $\pm$ 0.005 \\

& Active Neuron Fraction 
& 0.671 $\pm$ 0.007 
& 0.673 $\pm$ 0.007 
& 0.679 $\pm$ 0.007 \\

\midrule

\multirow{5}{*}{\shortstack[l]{Permuted\\MNIST}}
& OR 
& \textbf{0.906 $\pm$ 0.005} 
& \textbf{0.905 $\pm$ 0.005} 
& \textbf{0.899 $\pm$ 0.004} \\

& Repr Effective Rank 
& \underline{0.858 $\pm$ 0.002} 
& \underline{0.864 $\pm$ 0.003} 
& \underline{0.873 $\pm$ 0.002} \\

& Repr 99\% Energy Rank 
& 0.828 $\pm$ 0.003 
& 0.834 $\pm$ 0.003 
& 0.844 $\pm$ 0.003 \\

& Hessian 99\% Energy Rank 
& 0.821 $\pm$ 0.003 
& 0.753 $\pm$ 0.007 
& 0.717 $\pm$ 0.007 \\

& Active Neuron Fraction 
& 0.346 $\pm$ 0.003 
& 0.377 $\pm$ 0.003 
& 0.391 $\pm$ 0.003 \\

\bottomrule
\end{tabular}
\end{table}

\compactpara{Subsampling Ablation.}
The preceding results estimate all metrics using the full validation set.
However, access to large held-out datasets may be unrealistic in practical
continual learning scenarios.
We therefore evaluate whether $\optread$ remains predictive when estimated
from substantially fewer samples.
For SCR, we estimate $\optread$ using 10\%, 1\%, and 0.1\% of the
validation set, corresponding to 1,000, 100, and 10 data points,
respectively.
For P-MNIST, we use 10\% and 1\% of the validation set, corresponding to
1,000 and 100 data points.
In all subsampling experiments, we estimate $\optread$ using only 16
mini-batches of size $m=1$.

Figure~\ref{fig:subsample bar plot} summarizes the results.
In SCR, $\optread$ remains the most accurate metric even when estimated
from only 0.1\% of the validation data, suggesting that the full-data
configuration used above is conservative.
In P-MNIST, the accuracy of $\optread$ decreases under subsampling,
especially for larger $k$, and can fall below representation-rank metrics
computed using the full validation set.
Nevertheless, it remains the strongest predictor among the metrics
compared under the same data budget.
This observation also suggests that $\optread$ can remain informative even
when estimated using mini-batches smaller than those used in the
$k$-step gain rollouts.

\section{Related Work}
\compactpara{Explanations and Diagnostics of Loss of Plasticity.}
Understanding and diagnosing loss of plasticity has been an active area
of research in both continual supervised and reinforcement learning
settings~\citep{dohare2024lop,lyle2025disentangling}.
A prominent line of work studies the structural properties of neural
representations.
In particular, representation collapse has been proposed as a key
mechanism underlying plasticity loss, where metrics such as the effective
rank and the 99\% energy rank of the representation are used to explain
and detect degraded trainability~\citep{dohare2023maintaining,dohare2024lop}.
Another widely studied class of diagnostics focuses on spectral
properties of the neural tangent kernel (NTK), including its rank,
sharpness, and condition number~\citep{lyle2025disentangling,tang2025mitigating},
where ill-conditioned NTKs are associated with optimization difficulty.
Loss of curvature directions, reflected by a collapse of the Hessian
rank, has also been proposed as an explanation for plasticity
loss~\citep{lewandowski2024direction,he2025spectral}.
More recently,~\citet{wang2026ftle} study finite-time Lyapunov exponents
(FTLEs), arguing that both excessively small and excessively large FTLEs
can impair plasticity.
On a more microscopic level, dormant or dead neurons are also used as
indicators of plasticity loss~\citep{sokar2023dormant,ma2024revisiting}.

While these works provide valuable explanations and empirical
correlations, many existing diagnostics are motivated primarily through
structural or geometric properties of the checkpoint, rather than through
a direct characterization of future optimization progress.
It therefore remains unclear whether such diagnostics can reliably predict
future trainability.
Our work studies this predictive question directly by evaluating
diagnostics against future optimization gain and by proposing an
optimization-derived metric with theoretical and empirical support.

\compactpara{Empirical Falsifications and Limitations of Existing Diagnostics.}
Several works have demonstrated that existing diagnostics cannot fully
explain trainability in neural networks.
\citet{lyle2023understanding} show that metrics such as weight norm,
weight rank, dead units, and feature rank can exhibit inconsistent
correlations with plasticity across settings.
\citet{lewandowski2024direction} construct empirical counterexamples
showing that update norm, neuron dormancy, representation rank, and
weight norm can all be misleading indicators of plasticity.
Likewise,~\citet{baveja2025unified} demonstrate that individual
indicators such as Hessian rank, unit sign entropy, and gradient norm
cannot fully characterize trainability on their own.

These works suggest that existing diagnostics are incomplete, but their
analysis is primarily empirical.
Our work complements this line of research by providing theoretical
counterexamples that explicitly characterize failure modes of several
widely used diagnostics.

\section{Discussion and Conclusion}
In this work, we study plasticity in deep continual learning from the
perspective of future optimization progress.
We propose $k$-step gain as an operational proxy for trainability and use
it to evaluate whether existing diagnostics can predict how much a
checkpoint will improve after further optimization on a target task.

Our theoretical results show that high structural diversity alone does not
guarantee trainability.
In particular, we construct regression and binary classification tasks on
which metrics based on representation rank and eNTK rank assign favorable values to
a checkpoint, even though full-support gradient descent cannot reduce
the loss from that checkpoint.
On the same tasks, we also show that randomly initialized weights are
likely to make substantial one-step progress, highlighting a concrete gap
between structural diagnostic values and actual optimization gain.

Motivated by this gap, we propose optimization readiness, an
optimization-derived metric combining gradient strength and gradient
reliability.
Under a smoothness assumption and an appropriate learning rate, we prove that the optimization readiness lower bounds the one-step gain.
Empirically, we find that $\optread$ reliably ranks checkpoints by their
1-, 10-, and 100-step gains on both Slowly-Changing Regression and Permuted MNIST, outperforming several
commonly used diagnostics.
Our subsampling ablation further suggests that $\optread$ can remain
informative even when estimated with limited validation data and a small
number of Monte Carlo samples.

Our study deliberately focuses on simple yet standard continual learning
testbeds in order to isolate the predictive behavior of plasticity
diagnostics.
This controlled setting allows us to connect theory and empirical
evaluation cleanly, but it also leaves several important directions for
future work.
First, our counterexamples use $\relu$ networks with one hidden layer.
Extending the analysis to deeper networks and other activation functions would
strengthen the theoretical scope.
Second, our empirical evaluation focuses on supervised continual learning.
Thus, testing whether the same observations hold in continual reinforcement
learning and larger-scale architectures and problems is an important next step.
Overall, our results suggest that diagnostics of plasticity should be
evaluated not only as retrospective explanations of observed training
behavior, but also as prospective predictors of future optimization gain.

\acksection
This work is supported in part by the US National Science Foundation under the awards III-2128019, SLES-2331904, and CAREER-2442098, the Commonwealth Cyber Initiative's Central Virginia Node under the award VV-1Q26-001, and a Cisco Faculty Research Award.

\bibliographystyle{plainnat}
\bibliography{reference}

\appendix

\section{Math Background}
\begin{theorem}
    \label{thm: matrix Bernstein inequality}
    (Matrix Bernstein inequality)
    Let $M_1, \dots, M_k$ be independent random symmetric matrices such that $M_i \in \R^{n \times n}$,
    $\E\qty[M_i] = 0$,
    and $\norm{M_i} \le C$ almost surely for all $i \in \qty{1, \dots, k}$.
    Then, for all $t \ge 0$, it holds that
    \begin{align}
        \Pr\qty(\norm{\sum_{i=1}^k M_i} \ge t)
        \le 2 n \exp\qty(\frac{-t^2}{2\qty(\sigma^2 + C t/3)}),
    \end{align}
    where $\sigma^2 \doteq \norm{\sum_{i=1}^k \E[M_i^2]}$.
\end{theorem}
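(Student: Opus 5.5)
The inequality in Theorem~\ref{thm: matrix Bernstein inequality} is the standard matrix Bernstein bound, and I would prove it by the matrix Laplace-transform method in the form developed by Tropp. The plan has four steps: bound $\lambda_{\max}$ of $S \doteq \sum_{i=1}^k M_i$, then $\lambda_{\max}(-S)$, and combine them with a union bound. This works because $\norm{S} = \max\{\lambda_{\max}(S), \lambda_{\max}(-S)\}$, and the sequence $-M_i$ satisfies the same hypotheses as $M_i$. That symmetry is where the factor $2$ comes from; the factor $n$ comes from a trace bound in the third step.

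\textbf{Step 1 (Laplace transform).} For any $\vartheta>0$, I would apply Markov's inequality to $e^{\vartheta\lambda_{\max}(S)} = \lambda_{\max}(e^{\vartheta S}) \le \operatorname{tr} e^{\vartheta S}$. This gives
\[
\Pr(\lambda_{\max}(S)\ge t) \le e^{-\vartheta t}\,\E\operatorname{tr}\exp(\vartheta S).
\]

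\textbf{Step 2 (subadditivity of matrix cumulant generating functions).} I would use Lieb's concavity theorem: for fixed symmetric $H$, the map $A\mapsto \operatorname{tr}\exp(H+\log A)$ is concave on positive definite matrices. Conditioning on $M_1,\dots,M_{k-1}$, applying Jensen's inequality to the last summand, and iterating over $i$ yields
\[
\E\operatorname{tr}\exp\Bigl(\sum_i \vartheta M_i\Bigr) \le \operatorname{tr}\exp\Bigl(\sum_i \log\E e^{\vartheta M_i}\Bigr).
\]
This is the only genuinely non-commutative ingredient and the main obstacle. I would invoke Lieb's theorem as a known result rather than reprove it; the iteration itself is routine bookkeeping.

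\textbf{Step 3 (per-summand bound).} Next I would bound each mgf. Take $0<\vartheta<3/C$ and set $f(x) = (e^{x}-1-x)/x^2$, which is increasing. For $\abs{x}\le \vartheta C$, the Taylor series gives $f(x)\le f(\vartheta C) \le \frac{1}{2(1-\vartheta C/3)}$, using $j!\ge 2\cdot 3^{j-2}$. Both bounds are scalar inequalities on the spectrum of $M_i$, so they transfer to the matrix via the spectral theorem. Since $\E M_i=0$, this gives
\[
\E e^{\vartheta M_i} \preceq I + g(\vartheta)\E M_i^2 \preceq \exp\bigl(g(\vartheta)\E M_i^2\bigr), \qquad g(\vartheta)\doteq \frac{\vartheta^2/2}{1-\vartheta C/3}.
\]
The logarithm is operator monotone, so $\log\E e^{\vartheta M_i}\preceq g(\vartheta)\E M_i^2$. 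The trace exponential is monotone with respect to $\preceq$, so
\[
\operatorname{tr}\exp\Bigl(\sum_i \log\E e^{\vartheta M_i}\Bigr) \le \operatorname{tr}\exp\Bigl(g(\vartheta)\sum_i\E M_i^2\Bigr) \le n\, e^{g(\vartheta)\sigma^2}.
\]

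\textbf{Step 4 (optimize $\vartheta$ and combine).} Combining the previous steps gives $\Pr(\lambda_{\max}(S)\ge t)\le n\exp(-\vartheta t + g(\vartheta)\sigma^2)$. I would choose $\vartheta = t/(\sigma^2+Ct/3)$, which lies in $(0, 3/C)$. Then $1-\vartheta C/3 = \sigma^2/(\sigma^2+Ct/3)$, and the exponent simplifies to $-t^2/\bigl(2(\sigma^2+Ct/3)\bigr)$. Repeating the argument for $-S$ and taking a union bound gives the stated $2n\exp\bigl(-t^2/(2(\sigma^2+Ct/3))\bigr)$. The case $t=0$ is trivial.
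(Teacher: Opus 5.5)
The paper gives no proof of this statement: it is quoted in the appendix as standard background (Tropp's matrix Bernstein inequality) and used as a black box in the proof of Lemma~\ref{lem:random-gram-concentration}. Your argument is the standard Tropp proof (Laplace transform, Lieb's concavity theorem for the subadditivity of matrix cumulant generating functions, the transfer-rule bound $e^{\vartheta M_i}\preceq I+\vartheta M_i+f(\vartheta C)\vartheta^2M_i^2$, and the choice $\vartheta=t/(\sigma^2+Ct/3)$), and it is correct.

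There is one small slip. Your claim that this $\vartheta$ lies in $(0,3/C)$ fails when $\sigma^2=0$. In that case, however, $\sum_i\E[M_i^2]=0$ forces every $M_i=0$ almost surely, so the bound holds trivially for $t>0$.
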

\section{Proof of Theoretical Results}
\subsection{Proof of Lemma~\ref{lem:zero-output-update}}
\label{proof:zero-output-update}
\begin{proof}
For any $x$, we have
\begin{align}
    \nabla_v f_\theta(x)=\relu(Wx)
\end{align}
and
\begin{align}
    \nabla_{w_j} f_\theta(x)
    =
    v_j\indicator{w_j^\top x>0}x.
\end{align} 
Therefore,
\begin{align}
    \nabla_v\fL(\theta;\bar\fT(\bar{Y},\ell))
    =
    \frac1n
    \sum_{i=1}^n
    q_i(\theta)\relu(W\bar x_i)
    =
    \frac1n\Phi_\theta(\bar X)^\top q(\theta),
\end{align}
and, for each hidden unit \(j\),
\[
    \nabla_{w_j}\fL(\theta;\bar\fT(\bar{Y},\ell))
    =
    \frac1n
    \sum_{i=1}^n
    q_i(\theta)v_j
    \indicator{w_j^\top\bar x_i>0}\bar x_i.
\]
If \(v=0\), then every first-layer gradient is zero. If additionally
\(\Phi_\theta(\bar X)^\top q(\theta)=0\), then the output-layer gradient is also
zero, so \(\nabla_\theta\fL(\theta;\bar\fT(\bar{\fY},\ell))=0\). Hence gradient descent leaves
\(\theta\) unchanged for all time, which implies zero full-support gain.

For the one-step update formula, when \(\theta_0=(0,W)\), the first-layer gradient is
zero and
\begin{align}
    \begin{cases}
        v_1 &= -\frac{\eta}{n}
        \Phi_{\theta_0}(\bar X)^\top q(\theta_0),\\
        W_1 &= W.
\end{cases}
\end{align}
Thus, we have
\[
    f_{\theta_1}(\bar X)
    =
    \Phi_{\theta_0}(\bar X)v_1
    =
    -\frac{\eta}{n}
    \Phi_{\theta_0}(\bar X)\Phi_{\theta_0}(\bar X)^\top q(\theta_0)
    =
    -\eta G(W)q(\theta_0).
\]
\end{proof}
\subsection{Proof of Lemma~\ref{lem:psd-rank-bookkeeping}}
\label{proof: psd-rank-bookkeeping}
\begin{proof}
Since \(B\) is symmetric positive semidefinite, its singular values are exactly
its eigenvalues. 
Appending zero columns and multiplying by \(\alpha\) gives
nonzero singular values
\[
    \alpha b_1,\dots,\alpha b_r
\]
for \(\Phi_\theta(\bar X)\).
Since \(v=0\), the first-layer portion of
\(\nabla_\theta f_\theta(\bar x_i)\) is zero for every \(i\). 
Hence, the eNTK is
generated only by output-layer gradients, and we have
\[
    \eNTK{\bar X; \theta}
    =
    \Phi_\theta(\bar X)\Phi_\theta(\bar X)^\top
    =
    \alpha^2 B^2.
\]
The nonzero eigenvalues, and thus the nonzero singular values, of this
positive semidefinite matrix are
\[
    \alpha^2 b_1^2,\dots,\alpha^2 b_r^2.
\]
\end{proof}

\subsection{Proof of Lemma~\ref{lem:random-gram-concentration}}
\label{proof: random-gram-concentration}
\begin{proof}
Define \(a \doteq \sqrt{2/n}\) for simplicity. 
For each hidden unit \(j\), define
\(h_j\in\R^n\) by
\[
    (h_j)_i
    \doteq
    \relu((W_{\rand})_{j,i}).
\]
Then \((h_j)_i=a\) with probability \(1/2\) and \(0\) with probability
\(1/2\), independently over \(i\). 
Therefore, it holds that
\[
    G_{\rand}
    =
    \frac1n\sum_{j=1}^M h_jh_j^\top .
\]
For \(i\neq k\), we have
\[
    \E\qty[(h_j)_i(h_j)_k]
    =
    \E\qty[(h_j)_i]\E[(h_j)_k]
    =
    \frac{a^2}{4}
    =
    \frac1{2n},
\]
while
\[
    \E\qty[(h_j)_i^2]
    =
    \frac{a^2}{2}
    =
    \frac1n.
\]
Thus,  we have
\begin{align}
    \E\qty[h_jh_j^\top] = \frac1{2n}(J_n+I_n)
\end{align}
and
\begin{align}
    \bar G
    \doteq
    \E[G_{\rand}]
    =
    \frac{M}{2n^2}(J_n+I_n)
    =
    \lambda(J_n+I_n).
\end{align}

It remains to prove concentration. Let
$A_j\doteq \frac1n h_jh_j^\top$ and
$Q_j\doteq A_j-\E[A_j]$.
Then, \(G_{\rand}-\bar G=\sum_{j=1}^M Q_j\), and the \(Q_j\)'s are independent,
zero-mean, symmetric matrices. 
Since \(\norm{h_j}_2^2\le na^2=2\), it holds that
\[
    \norm{A_j}_2
    =
    \frac1n\norm{h_j}_2^2
    \le
    \frac2n.
\]
By Jensen's inequality, \(\norm{\E[A_j]}_2\le 2/n\), and hence
$\norm{Q_j}_2\le \frac4n$.
Moreover, we have
\[
    \norm{\E\qty[Q_j^2]}_2
    \le
    \norm{\E\qty[A_j^2]}_2
    \le
    \E[\norm{A_j}_2^2]
    \le
    \frac4{n^2}.
\]
Therefore, the matrix-variance parameter satisfies
\[
    \norm{\sum_{j=1}^M\E\qty[Q_j^2]}_2
    \le
    \frac{4M}{n^2}.
\]
By Matrix Bernstein (Theorem~\ref{thm: matrix Bernstein inequality}), there is a universal constant \(C_0>0\), such that with
probability at least \(1-\delta\),
\[
    \norm{G_{\rand}-\bar G}_2
    \le
    C_0
    \qty(
        \frac{\sqrt{M\log(2n/\delta)}}{n}
        +
        \frac{\log(2n/\delta)}{n}).
\]
Let \(z\doteq\log(2n/\delta)\). 
Since \(\lambda=M/(2n^2)\), the last display gives
\[
    \frac{\norm{G_{\rand}-\bar G}_2}{\lambda}
    \le
    2C_0\frac{n\sqrt z}{\sqrt M}
    +
    2C_0\frac{nz}{M}.
\]
If \(M\ge Cn^2\rho z\), then, after increasing the universal constant \(C\) if
necessary, the right-hand side is at most \(1/\sqrt{\rho}\). 
Consequently, we have
\[
    \norm{G_{\rand}-\bar G}_2
    \le
    \frac{\lambda}{\sqrt{\rho}},
\]
as claimed.
\end{proof}

\subsection{Proof of Theorem~\ref{thm:regression-counterexample-condensed}}
\label{proof:regression-counterexample-condensed}
\begin{proof}
Let
\[
    B_{\mathrm{reg}}
    \doteq
    \begin{bmatrix}
        I_{n-2} & 0_{(n-2)\times 2}\\
        0_{2\times(n-2)} & 0_{2\times 2}
    \end{bmatrix}
\]
and $ \alpha\doteq \frac12$.
Choose \(v_{\pre}=0\) and choose \(W_{\pre}\), such that
\[
    \Phi_{\theta_{\pre}}(\bar X)
    =
    \alpha
    \begin{bmatrix}
        B_{\mathrm{reg}} & 0_{n\times(M-n)}
    \end{bmatrix}.
\]
This is realized, for example, by setting
\[
    (W_{\pre})_{j,i}
    =
    \begin{cases}
        \alpha, & j=i\le n-2,\\
        -1, & \text{otherwise}.
    \end{cases}
\]
The nonzero eigenvalues of \(B_{\mathrm{reg}}\) are all equal to \(1\), with
multiplicity \(n-2\). 
Lemma~\ref{lem:psd-rank-bookkeeping} therefore implies
that the nonzero singular values of \(\Phi_{\theta_{\pre}}(\bar X)\) are
\[
    \underbrace{\alpha,\dots,\alpha}_{n-2\text{ times}},
\]
and the nonzero singular values of \(\eNTK{\bar X; \theta_{\pre}}\) are
\[
    \underbrace{\alpha^2,\dots,\alpha^2}_{n-2\text{ times}}.
\]
Thus, we have
\begin{align}
    \begin{cases}
        \enrank_{0.99}\qty(\Phi_{\theta_{\pre}}(\bar X))
        &= \left\lceil 0.99(n-2)\right\rceil;\\
        \enrank_{0.99}\qty(\eNTK{\bar X; \theta_{\pre}})
        &= \left\lceil 0.99(n-2)\right\rceil.
    \end{cases}
\end{align}
The normalized nonzero singular values of \(\Phi_{\theta_{\pre}}(\bar X)\) are
all \(1/(n-2)\), so
\[
    \effrank\qty(\Phi_{\theta_{\pre}}(\bar X))
    = \exp\qty(-\sum_{i=1}^{n-2} \frac1{n-2}\log\frac1{n-2})
    = n-2.
\]
This proves part (i) of Theorem~\ref{thm:regression-counterexample-condensed}.

For the squared loss, at a zero-output checkpoint, we have
\[
    q(\theta_{\pre}) = -\bar Y_{\mathrm{reg}}.
\]
Since \(B_\reg\bar Y_\reg =0\), we have
\[
    \Phi_{\theta_{\pre}}(\bar X)^\top q(\theta_{\pre})
    =
    -\alpha
    \begin{bmatrix}
        B_\reg\\
        0
    \end{bmatrix}
    \bar Y_\reg
    =
    0.
\]
Lemma~\ref{lem:zero-output-update} implies that full-support gradient descent
leaves \(\theta_{\pre}\) unchanged for every learning rate. 
In addition, it holds that
\[
    \fL(\theta_{\pre};\bar\fT_{\mathrm{reg}})
    =
    \frac1{2n}\norm{\bar Y_{\mathrm{reg}}}_2^2
    =
    \frac1n.
\]
Hence, we have \(\fG_{\full}^{(k)}(\theta_{\pre};\bar\fT_{\mathrm{reg}})=0\) for all
\(k\ge 1\), proving part (ii) of Theorem~\ref{thm:regression-counterexample-condensed}.

It remains to prove the random-initialization claim. 
Let \(\theta_0=\theta_{\rand}\) and \(G_{\rand}\) be as in
Lemma~\ref{lem:random-gram-concentration}, and set
$\lambda=\frac{M}{2n^2}$
and $\eta=\frac1\lambda=\frac{2n^2}{M}$.
By Lemma~\ref{lem:zero-output-update}, since
\(q(\theta_0)=-\bar Y_\reg\), it holds that
\[
    f_{\theta_1}(\bar X)
    =
    \eta G_{\rand}\bar Y_{\mathrm{reg}}.
\]
Furthermore, since \(\bar Y_{\mathrm{reg}}\perp \onevec_n\), we get
\[
    \bar G\bar Y_\reg = \lambda\bar Y_\reg.
\]
Therefore, we have
\[
    f_{\theta_1}(\bar X)-\bar Y_{\mathrm{reg}}
    =
    \eta(G_{\rand}-\bar G)\bar Y_{\mathrm{reg}}.
\]
Applying Lemma~\ref{lem:random-gram-concentration} with
\(\rho=\mgap\), the stated width condition implies that, with probability at
least \(1-\delta\),
\[
    \norm{G_{\rand}-\bar G}_2
    \le
    \frac{\lambda}{\sqrt{\mgap}}.
\]
On this event, it holds that
\[
    \fL\qty(\theta_1;\bar\fT_\reg)
    =
    \frac1{2n}
    \norm{f_{\theta_1}(\bar X)-\bar Y_\reg}_2^2
    \le
    \frac1{2n}
    \frac1{\mgap}
    \norm{\bar Y_\reg}_2^2
    =
    \frac1{\mgap}
    \fL\qty(\theta_0;\bar\fT_\reg).
\]
As a result, we have
\[
    \fG_{\full}^{(1)}(\theta_{\rand};\bar\fT_{\mathrm{reg}})
    =
    \frac{
        \fL(\theta_0;\bar\fT_{\mathrm{reg}})
        -
        \fL(\theta_1;\bar\fT_{\mathrm{reg}})
    }{
        \fL(\theta_0;\bar\fT_{\mathrm{reg}})
    }
    \ge
    1-\frac1{\mgap}.
\]
\end{proof}

\subsection{Proof of Theorem~\ref{thm:classification-counterexample-condensed}}
\label{proof:classification-counterexample-condensed}
\begin{proof}
For simplicity, we write
$\bar Y=\bar Y_{\mathrm{cls}}$ and $\bar S=\bar S_{\mathrm{cls}}= 2\bar Y-\onevec_n$.
By construction, it holds that $\bar S^\top \onevec_n=0$ and $\norm{\bar S}_2^2=n$.
Define $B_{\mathrm{cls}} \doteq I_n+\frac1n\qty(\onevec_n\onevec_n^\top-\bar S\bar S^\top)$
and $\alpha \doteq \frac12$.
Equivalently, we have
\begin{align}
    (B_\cls)_{i,j} =
    \begin{cases}
        1, & i=j,\\
        0, & i\neq j\text{ and }\bar y_i=\bar y_j,\\
        2/n, & \bar y_i\neq \bar y_j.
    \end{cases}
\end{align}
Thus, \(B_\cls\) has nonnegative entries. 
Choose \(v_{\pre}=0\) and choose \(W_{\pre}\), such that
\[
    \Phi_{\theta_{\pre}}(\bar X)
    =
    \alpha
    \begin{bmatrix}
        B_{\mathrm{cls}} & 0_{n\times(M-n)}
    \end{bmatrix}.
\]
One concrete realization is to set, for \(j\le n\),
\[
    (W_{\pre})_{j,i}
    =
    \begin{cases}
        \alpha (B_{\mathrm{cls}})_{i,j},
        & (B_{\mathrm{cls}})_{i,j}>0,\\
        -1,
        & (B_{\mathrm{cls}})_{i,j}=0,
    \end{cases}
\]
and to set \((W_{\pre})_{j,i}=-1\) for all \(j>n\).

The eigenspaces of \(B_{\mathrm{cls}}\) are 
\begin{align}
    \begin{cases}
        B_{\mathrm{cls}}\bar S &= 0;\\
        B_{\mathrm{cls}}\onevec_n &= 2\onevec_n;\\
        B_{\mathrm{cls}}u &= u
    \end{cases}
\end{align}
for every $u\perp\mathrm{span}\qty{\onevec_n,\bar S}$.
Hence, the eigenvalues of \(B_{\mathrm{cls}}\) are
\[
    2,\quad
    \underbrace{1,\dots,1}_{n-2\text{ times}},
    \quad
    0.
\]
By Lemma~\ref{lem:psd-rank-bookkeeping}, the nonzero singular values of
\(\Phi_{\theta_{\pre}}(\bar X)\) are
\[
    2\alpha,
    \quad
    \underbrace{\alpha,\dots,\alpha}_{n-2\text{ times}}.
\]
Therefore, for \(1\le k\le n-1\), the fraction of squared singular-value energy
captured by the top \(k\) singular values is
\[
    \frac{(2\alpha)^2+(k-1)\alpha^2}
         {(2\alpha)^2+(n-2)\alpha^2}
    =
    \frac{k+3}{n+2}.
\]
The smallest \(k\) for which this is at least \(0.99\) is
$\left\lceil 0.99(n+2)-3\right\rceil$.
For the effective rank, the normalized, nonzero singular values are
\[
    \frac2n,
    \quad
    \underbrace{\frac1n,\dots,\frac1n}_{n-2\text{ times}}.
\]
Therefore, we have
\[
    \effrank\qty(\Phi_{\theta_{\pre}}(\bar X))
    =\exp\qty(
        -\frac2n\log\frac2n
        -(n-2)\frac1n\log\frac1n)
    =\frac{n}{2^{2/n}}.
\]
Similarly, the nonzero singular values of \(\eNTK{\bar X; \theta_{\pre}}\) are
\[
    4\alpha^2,
    \quad
    \underbrace{\alpha^2,\dots,\alpha^2}_{n-2\text{ times}}.
\]
Thus, for \(1\le k\le n-1\), the fraction of squared singular-value energy
captured by the top \(k\) eNTK singular values is
\[
    \frac{(4\alpha^2)^2+(k-1)(\alpha^2)^2}
         {(4\alpha^2)^2+(n-2)(\alpha^2)^2}
    =
    \frac{k+15}{n+14},
\]
and hence
\[
    \enrank_{0.99}\qty(\eNTK{\bar X; \theta_{\pre}})
    = \left\lceil 0.99(n+14)-15\right\rceil.
\]
This proves part (i) of Theorem~\ref{thm:classification-counterexample-condensed}.

For the \(0/1\)-label logistic loss,
\[
    \frac{\partial}{\partial z}\ell_{\mathrm{log}}(z,y)
    = \frac{\exp(z)}{1+\exp(z)}-y.
\]
At a zero-output checkpoint, this gives
$q_i(\theta_{\pre}) =\frac12-\bar y_i= -\frac12\bar s_i,$
and $q(\theta_{\pre})=-\frac12\bar S$.
Since \(B_{\mathrm{cls}}\bar S=0\), then we get
\[
    \Phi_{\theta_{\pre}}(\bar X)^\top q(\theta_{\pre})
    =
    -\frac{\alpha}{2}
    \begin{bmatrix}
        B_{\mathrm{cls}}\\
        0
    \end{bmatrix}
    \bar S
    =
    0.
\]
Lemma~\ref{lem:zero-output-update} implies that full-support gradient descent
leaves \(\theta_{\pre}\) unchanged for every learning rate. Since
\(f_{\theta_{\pre}}(\bar x_i)=0\) for all \(i\), we get
\[
    \fL(\theta_{\pre};\bar\fT_{\mathrm{cls}})
    =
    \frac1n\sum_{i=1}^n\log 2
    =
    \log 2.
\]
Consequently, we have
\[
    \fG_{\full}^{(k)}(\theta_{\pre};\bar\fT_{\mathrm{cls}})
    =
    0
\]
for all $k\ge 1$
proving part (ii) of Theorem~\ref{thm:classification-counterexample-condensed}.

It remains to prove the random-initialization claim. 
Let \(\theta_0=\theta_{\rand}\) and \(G_{\rand}\) be as in
Lemma~\ref{lem:random-gram-concentration}.
Set $\lambda=\frac{M}{2n^2}$.
At \(\theta_0\), the prediction is zero and
$q(\theta_0)=-\frac12\bar S$.
By Lemma~\ref{lem:zero-output-update}, we have
\[
    f_{\theta_1}(\bar X)
    =
    \frac{\eta}{2}G_{\rand}\bar S.
\]
With
\[
    \eta=\frac{2\gamma_{\mgap}}{\lambda}
    =
    \frac{4\gamma_{\mgap}n^2}{M},
\]
and using
\[
    \bar G\bar S
    =
    \lambda\bar S
\]
by \(\bar S\perp\onevec_n\), we obtain
\[
    f_{\theta_1}(\bar X)-\gamma_{\mgap}\bar S
    =
    \frac{\eta}{2}(G_{\rand}-\bar G)\bar S.
\]
Next, we apply Lemma~\ref{lem:random-gram-concentration} with
\[
    \rho = \qty(\frac{2\gamma_{\mgap}\mgap}{\log 2})^2.
\]
The stated width condition implies \(M\ge Cn^2\rho\log(2n/\delta)\), after
absorbing the numerical factor \((2/\log 2)^2\) into the universal constant
\(C\). 
Hence, with probability at least \(1-\delta\), it holds that
\[
    \frac{\norm{G_{\rand}-\bar G}_2}{\lambda}
    \le
    \frac{\log 2}{2\gamma_{\mgap}\mgap}.
\]
On this event, we get 
\[
    \norm{
        f_{\theta_1}(\bar X)-\gamma_{\mgap}\bar S
    }_2
    \le
    \gamma_{\mgap}
    \frac{\norm{G_{\rand}-\bar G}_2}{\lambda}
    \norm{\bar S}_2.
\]
Since \(\norm{\bar S}_2=\sqrt n\), we have
\[
    \frac1n
    \sum_{i=1}^n
    \abs{
        \bar s_i f_{\theta_1}(\bar x_i)-\gamma_{\mgap}
    }
    =
    \frac1n
    \norm{
        f_{\theta_1}(\bar X)-\gamma_{\mgap}\bar S
    }_1
    \le
    \gamma_{\mgap}
    \frac{\norm{G_{\rand}-\bar G}_2}{\lambda}
    \le
    \frac{\log 2}{2\mgap}.
\]
Let \(\varphi(u) \doteq \log(1+\exp(-u))\). Since \(\abs{\varphi'(u)}\le 1\),
it holds that \(\varphi\) is \(1\)-Lipschitz. Also, for \(y\in\{0,1\}\) and
\(s=2y-1\), we have
\[
    \ell_{\mathrm{log}}(z,y)=\varphi(sz).
\]
Therefore,
\[
    \fL(\theta_1;\bar\fT_{\mathrm{cls}})
    =
    \frac1n\sum_{i=1}^n
    \varphi(\bar s_i f_{\theta_1}(\bar x_i))
    \le
    \varphi(\gamma_{\mgap})
    +
    \frac{\log 2}{2\mgap}.
\]
By the definition
\[
    \gamma_{\mgap}
    =
    \log\!\left(\frac{2\mgap}{\log 2}\right),
\]
we have
\[
    \varphi(\gamma_{\mgap})
    =
    \log(1+\exp(-\gamma_{\mgap}))
    \le
    \exp(-\gamma_{\mgap})
    =
    \frac{\log 2}{2\mgap}.
\]
Thus, we have
\[
    \fL(\theta_1;\bar\fT_{\mathrm{cls}})
    \le
    \frac{\log 2}{\mgap}.
\]
Since \(\fL(\theta_0;\bar\fT_{\mathrm{cls}})=\log 2\), it follows that
\[
    \fG_{\full}^{(1)}(\theta_{\rand};\bar\fT_{\mathrm{cls}})
    =
    \frac{
        \fL(\theta_0;\bar\fT_{\mathrm{cls}})
        -
        \fL(\theta_1;\bar\fT_{\mathrm{cls}})
    }{
        \fL(\theta_0;\bar\fT_{\mathrm{cls}})
    }
    \ge
    1-\frac1{\mgap}.
\]
\end{proof}

\subsection{Proof of Theorem~\ref{thm: optimization readiness one-step gain bound}}
\label{proof: optimization readiness one-step gain bound}
\begin{proof}
    We have by Assumption~\ref{assumption: smoothness}
    \begin{align}
        \fL(\theta_1; \fT)
        &\le \fL(\theta;\fT) - \eta\indot{g(\theta;\fT)}{\emgrad_{\fB_0}(\theta;\fT)} + \frac{\beta\eta^2}{2}\norm{\emgrad_{\fB_0}(\theta;\fT)}^2.
    \end{align}
    Then, by Assumption~\ref{assumption: unbiased estimator} and Theorem 2.1.5 of~\citet{nesterov2013introductory}, we get
    \begin{align}
        \E_{\fB_0}\qty[\fL(\theta_1;\fT)]
        &\le  \fL(\theta;\fT) - \eta\norm{g(\theta;\fT)}^2 + \frac{\beta\eta^2}{2}\E_{\fB_0}\qty[\norm{\emgrad_{\fB_0}(\theta;\fT)}^2]\\
        \fL(\theta; \fT) - \E_{\fB_0}\qty[\fL(\theta_1;\fT)]
        &\ge \eta\norm{g(\theta;\fT)}^2 - 
        \frac{\beta \eta^2}{2}\E_{\fB_0}\qty[\norm{\emgrad_{\fB_0}(\theta;\fT)}^2]\\
        &\ge \eta\norm{g(\theta;\fT)}^2 - 
        \beta \eta^2\E_{\fB_0}\qty[\norm{\emgrad_{\fB_0}(\theta;\fT)}^2]\\
        &=  \eta\norm{g(\theta;\fT)}^2 - 
        \beta\eta^2 \frac{\norm{g(\theta;\fT)}^2}{R(\theta;\fT)}\\
        &=\eta\norm{g(\theta;\fT)}^2\qty(1 - \frac{\beta\eta}{R(\theta;\fT)}).
    \end{align}
    Therefore, we further have
    \begin{align}
        \frac{ \fL(\theta; \fT) - \E_{\fB_0}\qty[\fL(\theta_1; \fT)]}{\fL(\theta; \fT)}
        &\ge \eta\frac{\norm{g(\theta;\fT)}^2}{\fL(\theta; \fT)} \qty(1 - \frac{\beta\eta}{R(\theta;\fT)})\\
        \fG^{(1)}(\theta;\fT) 
        &\ge \eta S(\theta;\fT)\qty(1 - \frac{\beta\eta}{R(\theta;\fT)}).
    \end{align}
    Let $\eta = \alpha R(\theta;\fT)/\beta$.
    Since $0 < \eta < R(\theta;\fT)/\beta$, we then have $0 < \alpha < 1$.
    Plugging in $\eta$, we have
    \begin{align}
        \fG^{(1)}(\theta; \fT)
        &\ge
        \frac{\alpha(1 - \alpha)}{\beta} S(\theta;\fT)R(\theta;\fT)\\
        &= \frac{\alpha(1 - \alpha)}{\beta}\optread(\theta;\fT).
    \end{align}
\end{proof}

\section{Additional Experiment Details}
\subsection{Estimation Details}
\label{app:estimation-details}

\compactpara{$k$-Step Gain Estimation.}
Let $\fT_\val \doteq (\fX,\fY,P_{\fX\fY}^\val)$ be a validation task and
let $\theta$ be the checkpoint whose trainability we wish to estimate.
We denote the validation set by
\[
    \fD_\val \doteq \qty{(x_i,y_i)}_{i=1}^{N_\val},
    \qquad (x_i,y_i) \sim P_{\fX\fY}^\val .
\]
We estimate the population loss $\fL(\theta;\fT_\val)$ using the empirical
validation loss
\[
    \emloss_{\fD_\val}(\theta;\fT_\val)
    \doteq
    \frac{1}{N_\val}
    \sum_{(x_i,y_i)\in\fD_\val}
    \ell(f_\theta(x_i),y_i).
\]
In all experiments, we use $N_\val=10{,}000$.

Starting from $\theta_0=\theta$, we run stochastic gradient descent for
$k$ steps,
\[
    \theta_{s+1}
    =
    \theta_s
    -
    \eta \emgrad_{\fB_s}(\theta_s;\fT_\val),
    \qquad s=0,\dots,k-1,
\]
where each mini-batch $\fB_s$ consists of $m=4$ samples drawn uniformly
with replacement from $\fD_\val$.
We use $\eta=10^{-3}$ for the gain-estimation rollouts.
For each checkpoint and validation task, we repeat this procedure for
$\fR=128$ independent rollouts and obtain terminal parameters
$\qty{\theta_k^{(i)}}_{i=1}^{\fR}$.
The expected post-update empirical loss is estimated as
\[
    \emloss_\fR(\theta_k;\fT_\val)
    \doteq
    \frac{1}{\fR}
    \sum_{i=1}^{\fR}
    \emloss_{\fD_\val}\qty(\theta_k^{(i)};\fT_\val).
\]
We then estimate the $k$-step gain by
\[
    \widehat{\fG}^{(k)}(\theta;\fT_\val)
    \doteq
    \frac{
        \emloss_{\fD_\val}(\theta;\fT_\val)
        -
        \emloss_\fR(\theta_k;\fT_\val)
    }{
        \emloss_{\fD_\val}(\theta;\fT_\val)
    }.
\]

\paragraph{Optimization Readiness Estimation.}
We estimate the population gradient $g(\theta;\fT_\val)$ using the
gradient of the full validation set
\[
    \widehat g(\theta;\fT_\val)
    \doteq
    \nabla_\theta \emloss_{\fD_\val}(\theta;\fT_\val).
\]
The empirical gradient strength is
\[
    \widehat S(\theta;\fT_\val)
    \doteq
    \frac{
        \norm{\widehat g(\theta;\fT_\val)}_2^2
    }{
        \emloss_{\fD_\val}(\theta;\fT_\val)
    }.
\]
To estimate gradient reliability, we independently sample
$\fR=128$ mini-batches $\qty{\fB_i}_{i=1}^{\fR}$ of size $m=4$ from
$\fD_\val$ with replacement and compute
\[
    \widehat R(\theta;\fT_\val)
    \doteq
    \frac{
        \norm{\widehat g(\theta;\fT_\val)}_2^2
    }{
        \frac{1}{\fR}
        \sum_{i=1}^{\fR}
        \norm{\emgrad_{\fB_i}(\theta;\fT_\val)}_2^2
    }.
\]
The estimated optimization readiness is then
\[
    \widehat{\optread}(\theta;\fT_\val)
    \doteq
    \widehat S(\theta;\fT_\val)\widehat R(\theta;\fT_\val).
\]

\paragraph{Baseline Metric Estimation.}
Let $X_\val$ denote the matrix of validation inputs, where the $i$-th row
is $x_i^\top$.
For both SCR and P-MNIST, we compute the representation effective rank
and 99\% energy rank as
\[
    \effrank(\Phi_\theta(X_\val)),
    \qquad
    \enrank_{0.99}(\Phi_\theta(X_\val)).
\]
We also compute the active-neuron fraction on the full validation set
using threshold $\tau_\act=0.1$.

For SCR, the network has a manageable number of parameters, so we compute
the eNTK rank and exact Hessian rank directly:
\[
    \enrank_{0.99}\qty(\eNTK{X_\val;\theta}),
    \qquad
    \enrank_{0.99}
    \qty(
        \nabla_\theta^2
        \emloss_{\fD_\val}(\theta;\fT_\val)
    ).
\]
For P-MNIST, we omit the eNTK rank because $f_\theta(x)$ is vector-valued,
so $\nabla_\theta f_\theta(x)$ is a Jacobian rather than a vector and the
resulting empirical kernel is not directly comparable to the scalar-output
case.

For the Hessian-rank metric in P-MNIST, we follow the Gram-matrix
approximation used by~\citet{lewandowski2024direction}.
Specifically, we sample $b=128$ input-label pairs from $\fD_\val$ and form
\[
    G
    \doteq
    \mqty[
        \nabla_\theta \ell(f_\theta(x_1),y_1)
        &
        \cdots
        &
        \nabla_\theta \ell(f_\theta(x_b),y_b)
    ].
\]
We then use
\[
    \enrank_{0.99}(G^\top G)
\]
as the approximated 99\% energy rank of the Hessian.

\subsection{Compute Resources}
Experiments were run on a shared Slurm cluster.
Since jobs were scheduled opportunistically, the exact hardware allocation
varied across runs.
SCR experiments were faster on CPU and took less than 3 hours per seed.
P-MNIST experiments benefited from GPU training and took approximately
10.5 hours per seed on an NVIDIA RTX 4000 GPU with 8GB of GPU memory.
\subsection{Figures}

\begin{figure}[htbp]
    \centering
    \begin{subfigure}{0.48\textwidth}
        \centering
        \includegraphics[width=\linewidth]{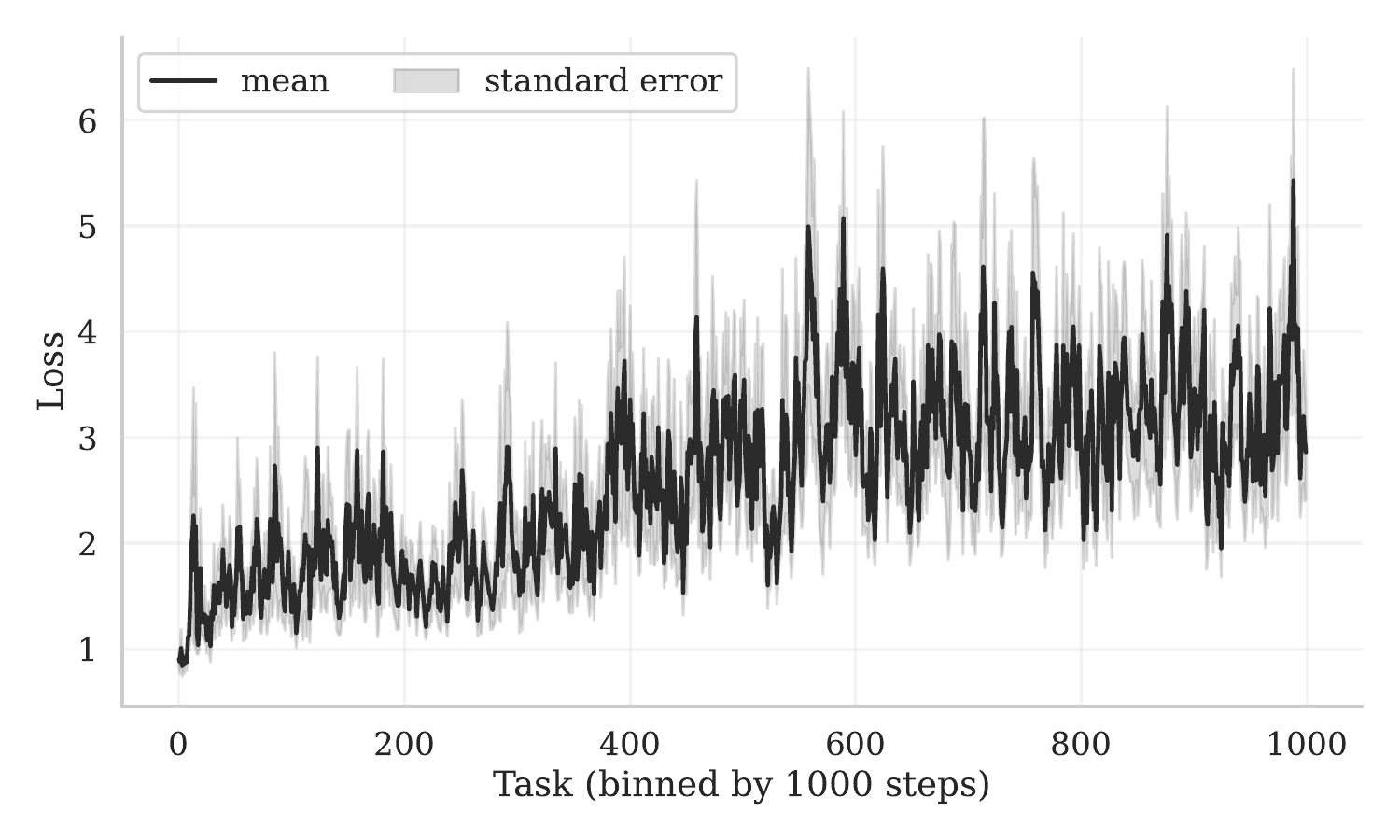}
        \caption{Slowly-Changing Regression}
        \label{fig:slow_reg}
    \end{subfigure}
    \hfill
    \begin{subfigure}{0.48\textwidth}
        \centering
        \includegraphics[width=\linewidth]{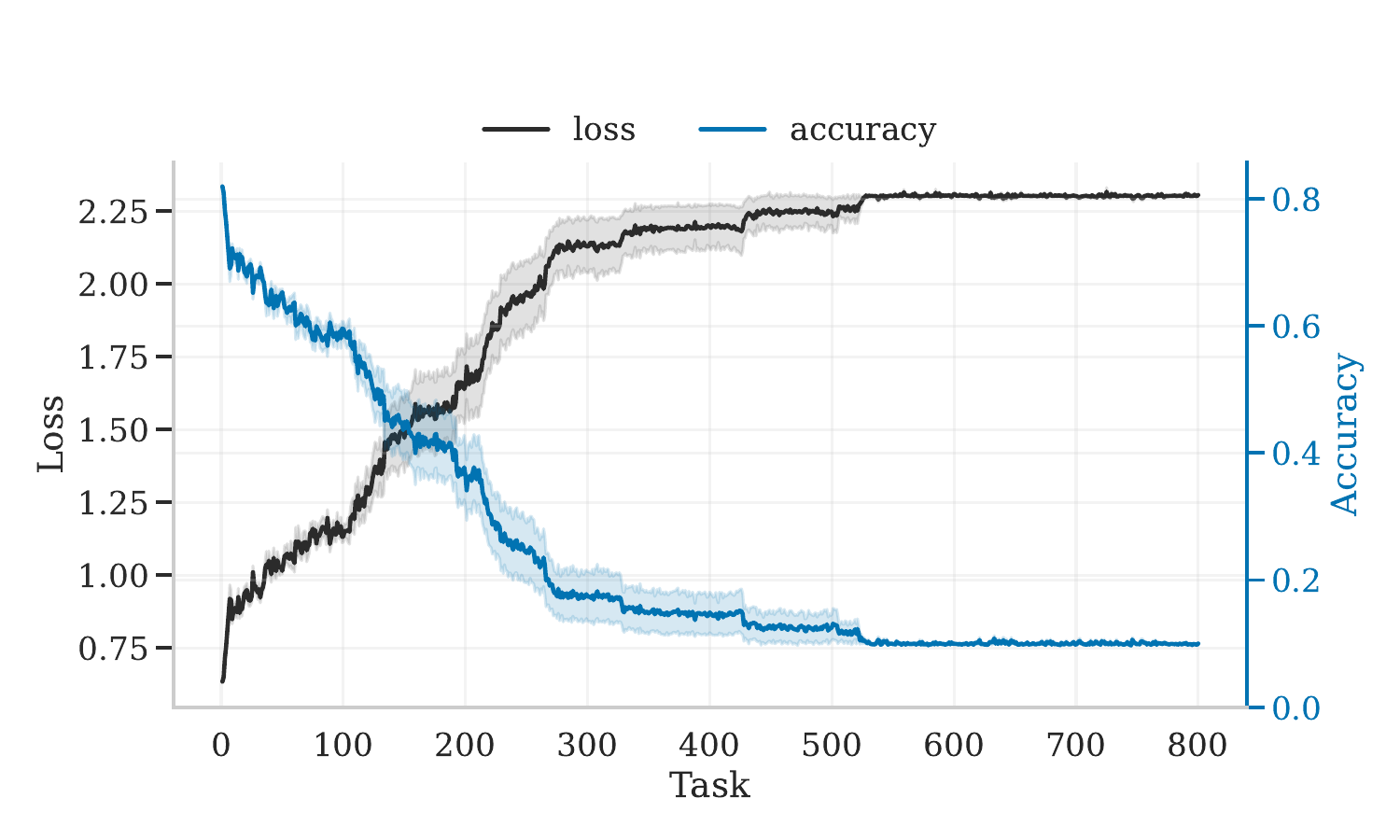}
        \caption{Permuted MNIST}
        \label{fig:perm_mnist}
    \end{subfigure}
    \caption{Learning curves of Slowly-Changing Regression and Permuted MNIST.}
    \label{fig:learning curve}
\end{figure}

\begin{figure}[htbp]
    \centering
    \begin{subfigure}{\textwidth}
        \centering
        \includegraphics[width=\linewidth]{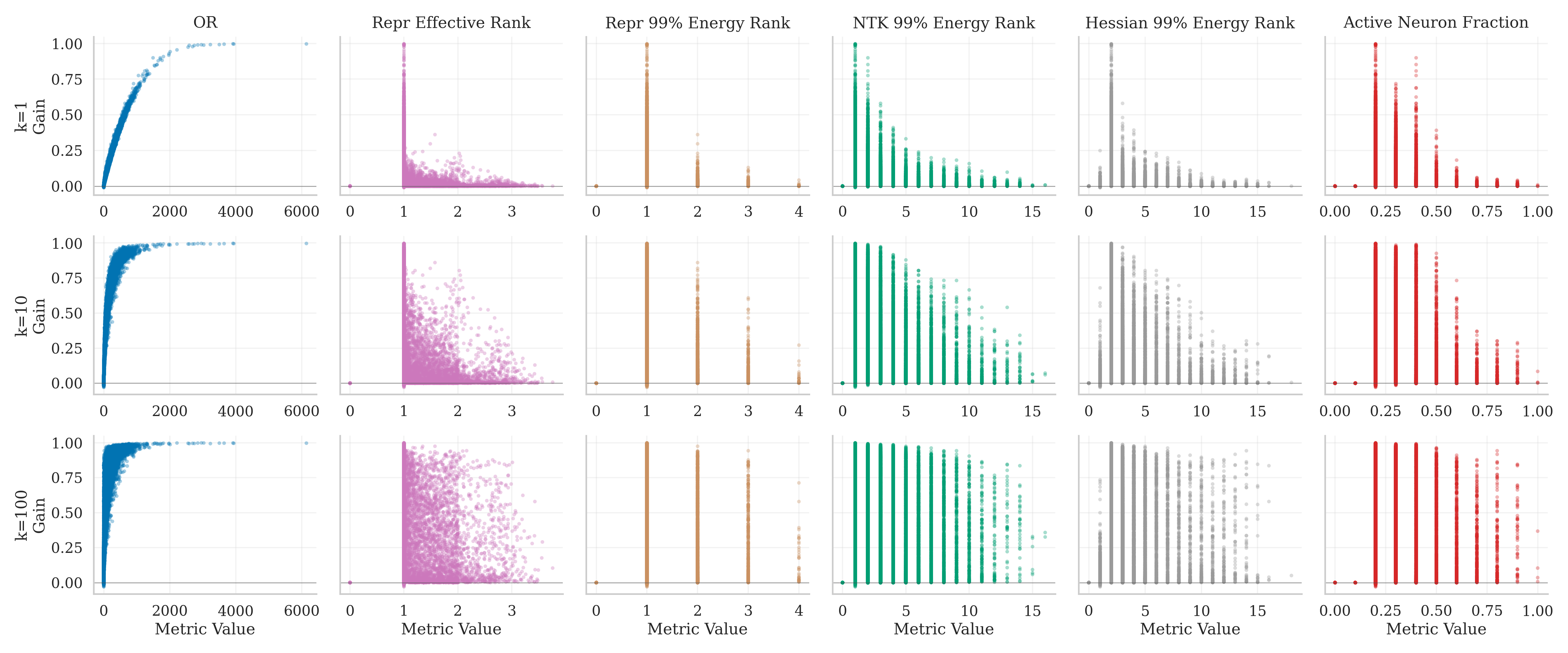}
        \caption{Slowly-Changing Regression. Results aggregated from 20 training runs and 30 validation tasks.}
        \label{fig:slow reg scatter}
    \end{subfigure}
    \vspace{0.5cm} 
    \begin{subfigure}{\textwidth}
        \centering
        \includegraphics[width=\linewidth]{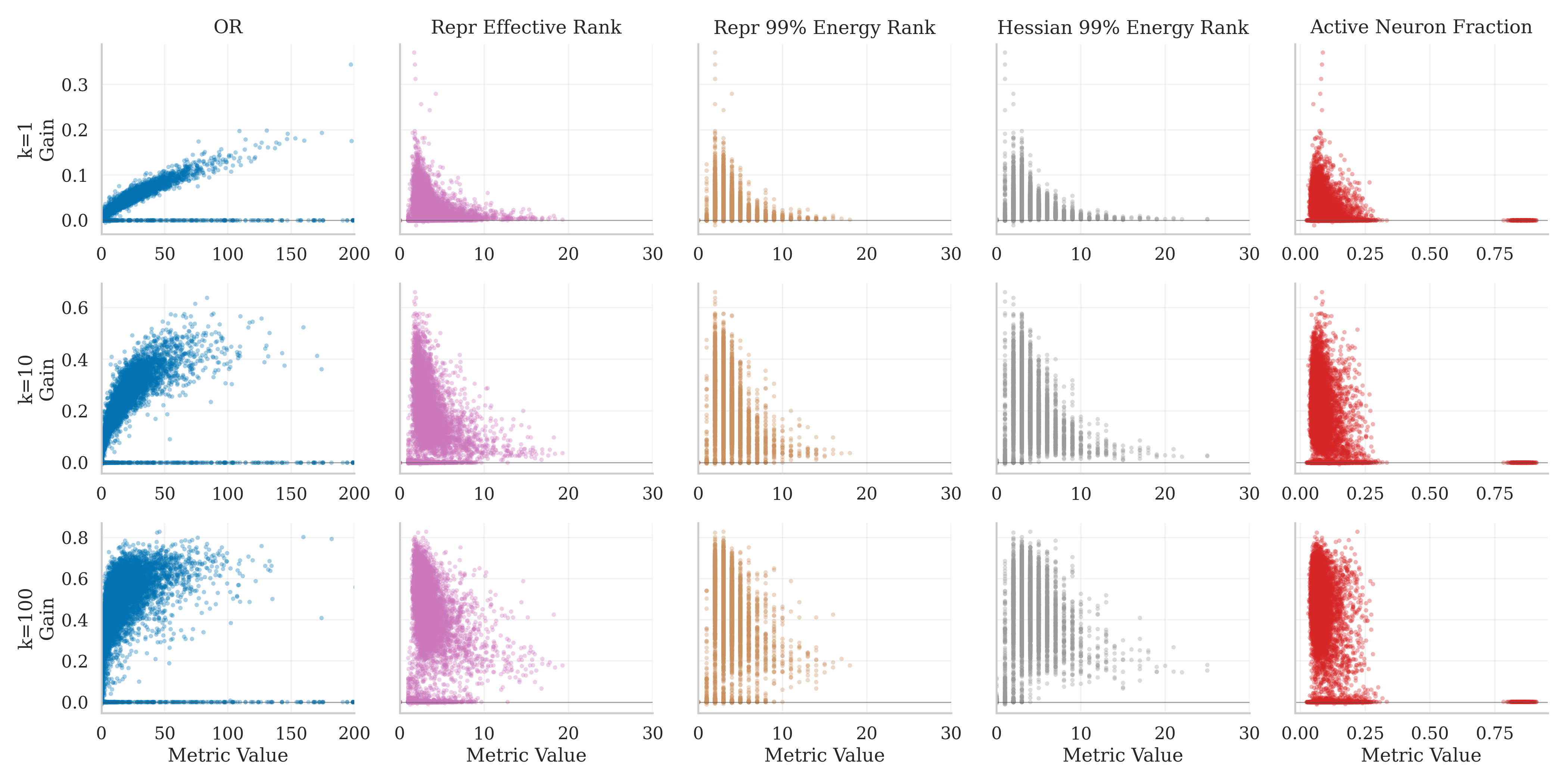}
        \caption{Permuted MNIST. Results aggregated from 20 training runs and 10 validation tasks.}
        \label{fig:perm_mnist scatter}
    \end{subfigure}
    \caption{1-, 10-, and 100-step gain vs. plasticity metric values for Slowly-Changing Regression and Permuted MNIST (extreme metric values clipped to account for outliers).}
    \label{fig:scatter plot}
\end{figure}

\begin{figure}[htbp]
    \centering
    \begin{subfigure}{\textwidth}
        \centering
        \includegraphics[width=\linewidth]{figures/trajectory/slowly_changing_regression.png}
        \caption{Slowly-Changing Regression. Results aggregated from 20 training runs and 30 validation tasks.}
        \label{fig:slow reg trajectory}
    \end{subfigure}
    \vspace{0.5cm} 
    \begin{subfigure}{\textwidth}
        \centering
        \includegraphics[width=\linewidth]{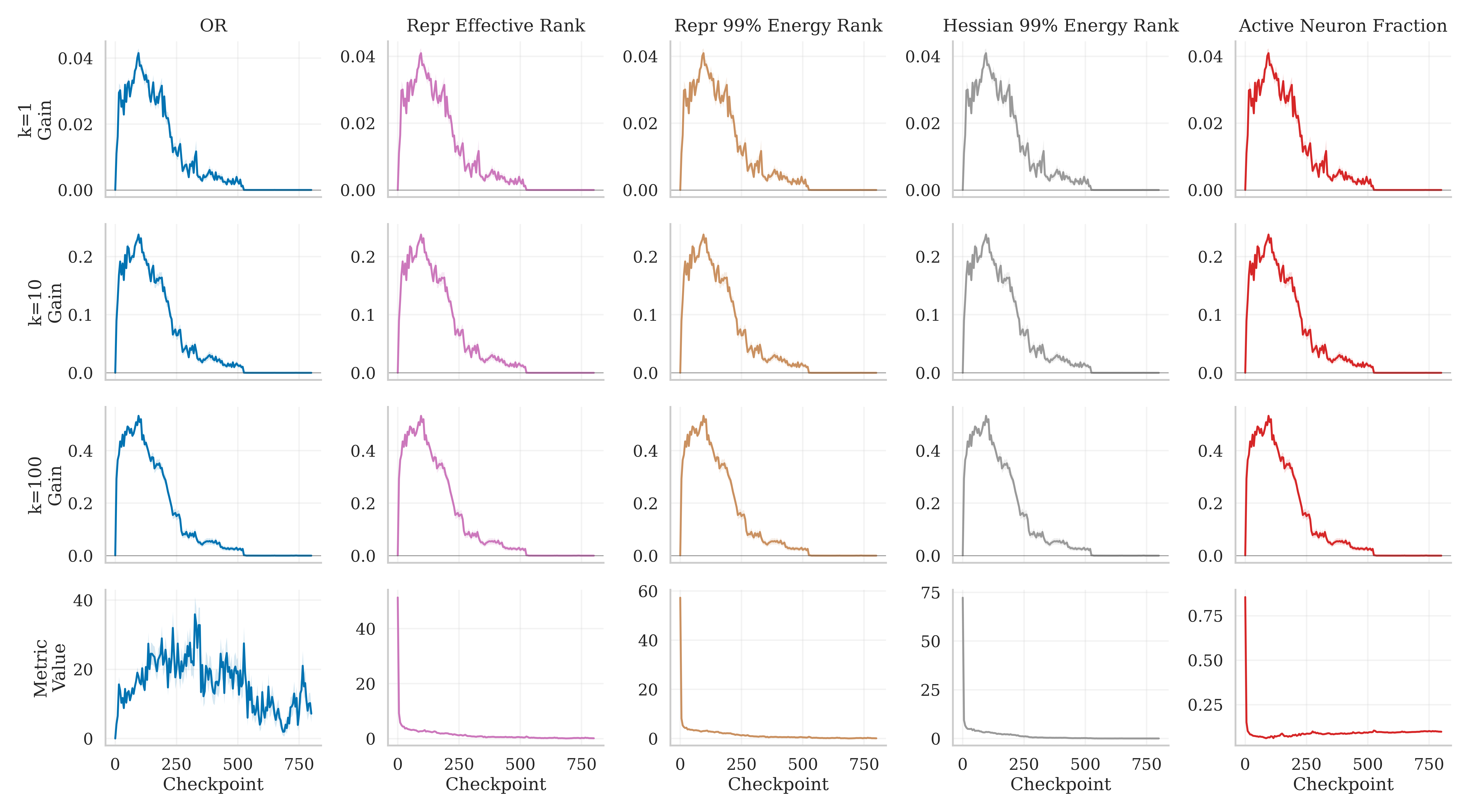}
        \caption{Permuted MNIST. Results aggregated from 20 training runs and 10 validation tasks.}
        \label{fig:perm_mnist trajectory}
    \end{subfigure}
    \caption{1-, 10-, and 100-step gain and plasticity metric values against checkpoints for Slowly-Changing Regression and Permuted MNIST (extreme metric values clipped to account for outliers). Shaded areas denote standard errors.}
    \label{fig:trajectory plot}
\end{figure}

\begin{figure}[htbp]
    \centering
    \begin{subfigure}{\textwidth}
        \centering
        \includegraphics[width=\linewidth]{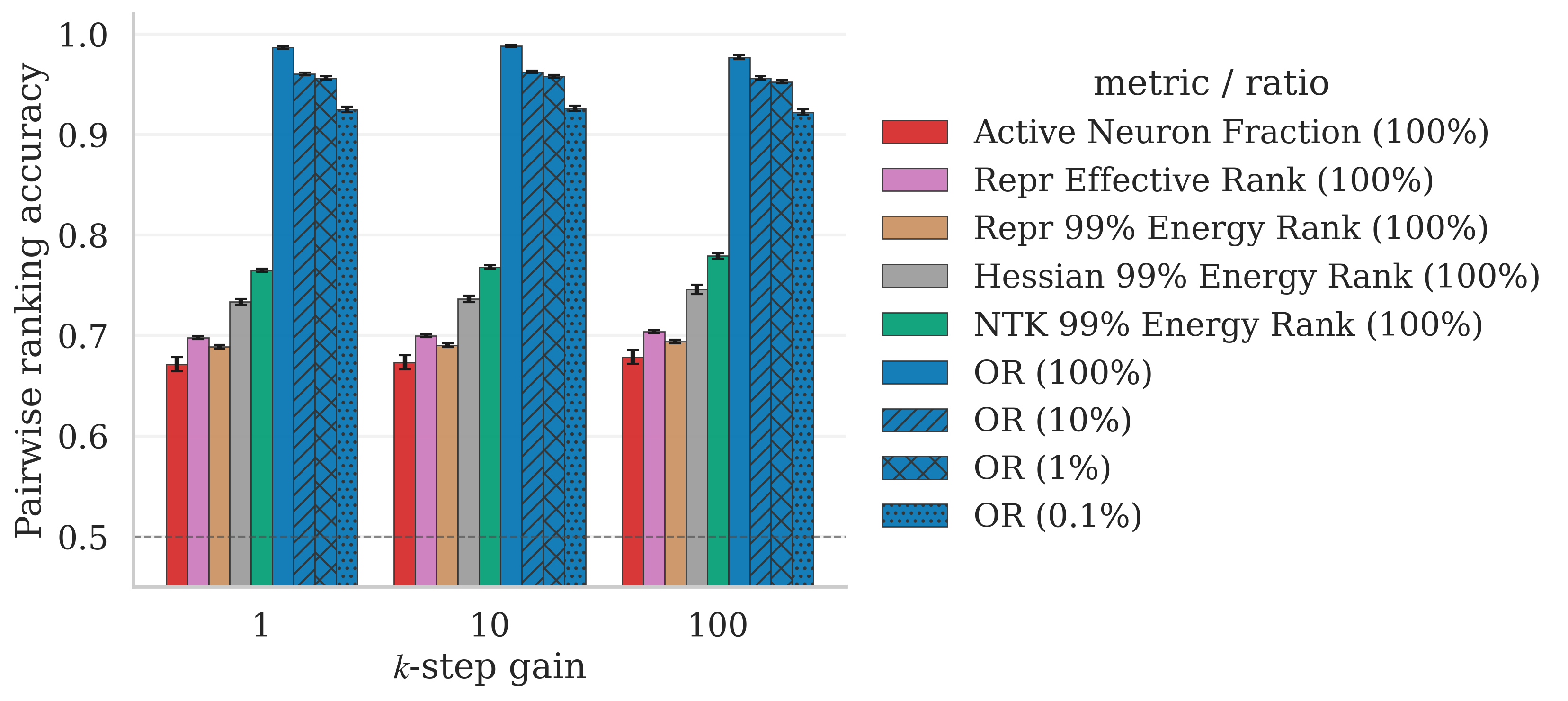}
        \caption{Slowly-Changing Regression. Results aggregated from 20 training runs and 30 validation tasks.}
        \label{fig:slow reg subsample}
    \end{subfigure}
    \vspace{0.5cm} 
    \begin{subfigure}{\textwidth}
        \centering
        \includegraphics[width=\linewidth]{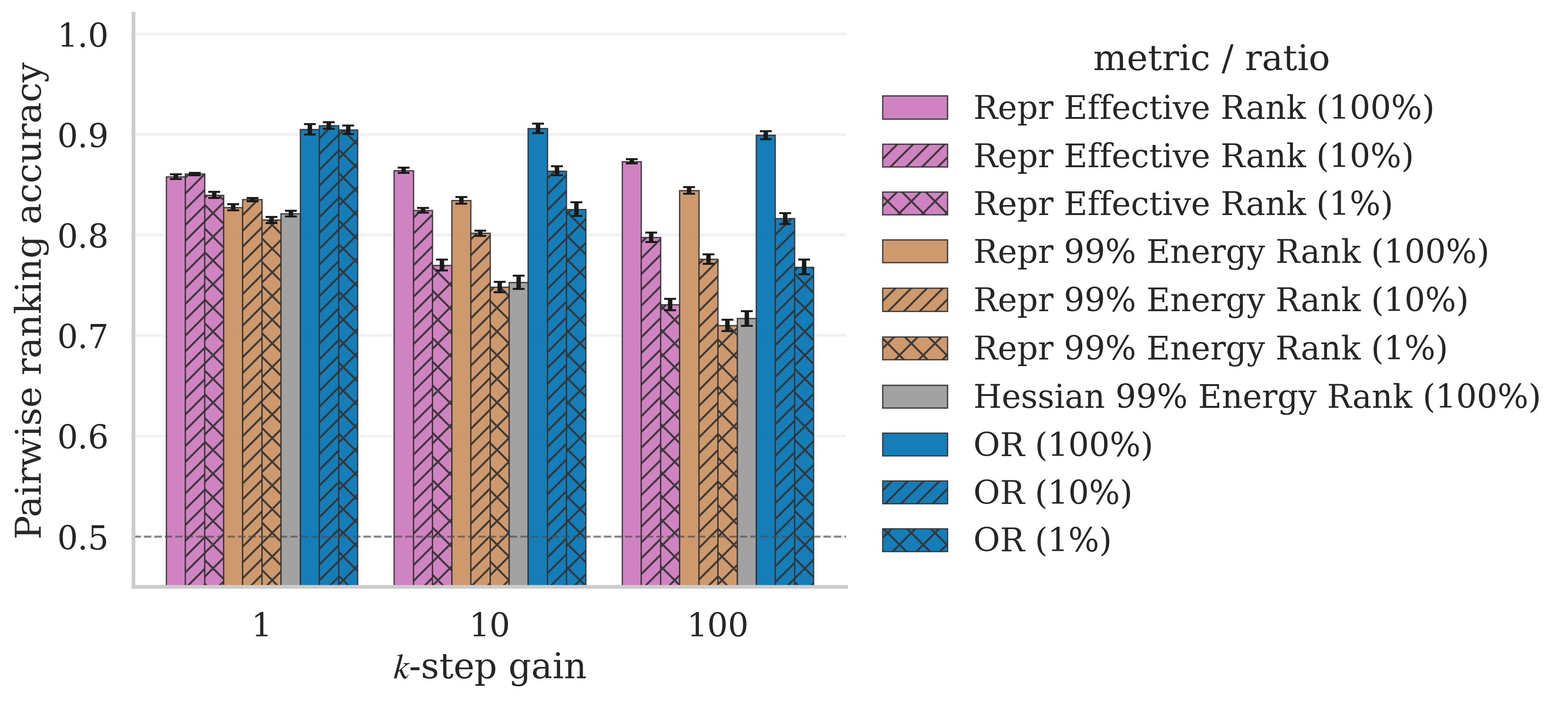}
        \caption{Permuted MNIST. Results aggregated from 20 training runs and 10 validation tasks. We omit the active neuron fraction because its accuracy is all under 50\%.}
        \label{fig:perm_mnist subsample}
    \end{subfigure}
    \caption{Subsampling ablation study results for Slowly-Changing Regression and Permuted MNIST. The error bars denote standard errors.}
    \label{fig:subsample bar plot}
\end{figure}

\end{document}